\documentclass{article}
\usepackage{graphicx,amsthm} 
\usepackage{subfigure}
\usepackage{wrapfig}
\usepackage{sidecap}
\usepackage{authblk}

\usepackage{multicol}
\usepackage{url}            

\usepackage{algorithm}
\usepackage{algorithmic}
\usepackage{bm}
\usepackage{amsmath}
\usepackage{amssymb}
\usepackage{amsfonts}
\usepackage{latexsym}
\usepackage{multirow,tikz,amsmath,comment}

\usetikzlibrary{arrows,fit,positioning,shapes,snakes}

\let\oldbibliography\thebibliography
\renewcommand{\thebibliography}[1]{%
  \oldbibliography{#1}%
  \setlength{\itemsep}{1pt}%
}

\newdimen\arrowsize
\pgfarrowsdeclare{arcsq}{arcsq}
{
  \arrowsize=0.2pt
  \advance\arrowsize by .5\pgflinewidth
  \pgfarrowsleftextend{-4\arrowsize-.5\pgflinewidth}
  \pgfarrowsrightextend{.5\pgflinewidth}
}
{
  \arrowsize=1.5pt
  \advance\arrowsize by .5\pgflinewidth
  \pgfsetdash{}{0pt} 
  \pgfsetroundjoin   
  \pgfsetroundcap    
  \pgfpathmoveto{\pgfpoint{0\arrowsize}{0\arrowsize}}
  \pgfpatharc{-90}{-140}{4\arrowsize}
  \pgfusepathqstroke
  \pgfpathmoveto{\pgfpointorigin}
  \pgfpatharc{90}{140}{4\arrowsize}
  \pgfusepathqstroke
}

\newtheorem{Theorem}{Theorem}

\newcommand\independent{\protect\mathpalette{\protect\independenT}{\perp}}
\def\independenT#1#2{\mathrel{\rlap{$#1#2$}\mkern2mu{#1#2}}}

\title{Discovery and Visualization of Nonstationary Causal Models}
\author[1,2]{Kun Zhang\thanks{kunz1@andrew.cmu.edu}}
\author[1,2]{Biwei Huang\thanks{biwei.huang@tuebingen.mpg.de}}
\author[3]{Jiji Zhang\thanks{jijizhang@ln.edu.hk}}
\author[2]{Bernhard Sch{\"o}lkopf\thanks{bs@tuebingen.mpg.de}}
\author[1]{Clark Glymour\thanks{cg09@andrew.cmu.edu}}
\affil[1]{Carnegie Mellon University}
\affil[2]{MPI for Intelligent Systems, T{\"u}bingen, Germany}
\affil[3]{Lingnan University, Hong Kong}

\date{}

\begin{document}
\maketitle

\vspace{-1cm}
\begin{abstract}


It is commonplace to encounter nonstationary data, of which the underlying generating process may change over time or across domains. The nonstationarity presents both challenges and opportunities for causal discovery. In this paper we propose a principled framework to handle nonstationarity, and develop methods to address three important questions. First, we propose an enhanced constraint-based method to detect variables whose local mechanisms are nonstationary and recover the skeleton of the causal structure over observed variables. Second, we present a way to determine some causal directions by taking advantage of information carried by changing distributions. Third, we develop a method for visualizing the nonstationarity of local mechanisms. Experimental results on various synthetic and real-world datasets are presented to demonstrate the efficacy of our methods.\end{abstract}

\section{Introduction}
In many fields of empirical sciences and engineering, we would like to obtain causal knowledge for many purposes.  As it is often difficult if not impossible to carry out randomized experiments, inferring causal relations from purely observational data, known as the task of causal discovery, has drawn much attention in several fields including computer science, statistics, philosophy, economics, and neuroscience.  
With the rapid accumulation of huge volumes of data of various types, causal discovery is facing exciting opportunities but also great challenges. One phenomenon such data often feature is that of distribution shift. Distribution shift may occur across domains or over time. For an example of the former kind, consider the problem of remote sensing image classification, which aims to derive land use and land cover information through the process of interpreting and classifying remote sensing imagery. The data collected in different areas and at different times usually have different distributions due to different physical factors related to ground, vegetation, illumination conditions, etc. As an example of the latter kind, the fMRI recordings are usually nonstationary: the causal connections in the brain may change with stimuli, tasks, states, the attention of the subject, etc.  More specifically, it is believed that one of the basic properties of the neural connections in the brain is its time-dependence \cite{DynamicC1}. To these situations many existing approaches to causal discovery fail to apply, as they assume a fixed causal model and hence a fixed joint distribution underlying the observed data. 



In this paper we assume that the underlying causal structure is a directed acyclic graph (DAG), but the mechanisms or parameters associated with the causal structure, or in other words the causal model, may change across domains or over time (we allow mechanisms to change in such a way that some causal links in the structure become vacuous or vanish over some time periods or domains). We aim to develop a principled framework to model such situations as well as practical methods to address these questions:

\begin{itemize}
\item{How to efficiently identify the variables whose local causal mechanisms are nonstationary and recover the skeleton of the causal structure over the observed variables?}

\item{How to take advantage of the information carried by distribution shifts for the purpose of identifying causal directions?}

\item{How to visualize the nonstationarity of those causal mechanisms that change over time or across domain?}
\end{itemize}

This paper is organized as follows. In Section~\ref{Sec:problem} we define and motivate the problem in more detail and review related work.  Section~\ref{Sec:method} proposes an enhanced constraint-based method for recovering the skeleton of the causal structure over the observed variables and identify those variables whose generating processes are nonstationary.  Section~\ref{Sec:unify} develops a method for determining some causal directions by exploiting nonstationarity. Section~\ref{Sec:vis} proposes a way to visualize nonstationarity. Section~\ref{Sec:simul} reports simulations results to test the performance of the proposed causal discovery approach when the ground truth is known.
Finally, we apply the method to some real-world datasets, including financial date and fMRI data, in Section~\ref{Sec:real}.

\section{Problem Definition and Related Work}\label{Sec:problem}

\subsection{Causal Discovery of Fixed Causal Models}
Most causal discovery methods assume that there is a fixed causal model underlying the observed data and aim to estimate it from the data. Classic approaches to causal discovery divide roughly into two types.  In late 1980's and early 1990's, it was noted that under appropriate assumptions, one could recover a Markov equivalence class of the underlying causal structure based on conditional independence relationships among the variables~\cite{Spirtes00,Pearl00}. This gives rise to the constraint-based approach to causal discovery, and the resulting equivalence class may contain multiple DAGs (or other related graphical objects to represent causal structures), which entail the same conditional independence relationships.  The required assumptions include the causal Markov condition and the faithfulness assumption, which entail a correspondence between separation properties in the underlying causal structure and statistical independence properties in the data.  The so-called score-based approach (see, e.g.,~\cite{Chickering02,Heckerman95}) searches for the equivalence class which gives the highest score under some scoring criterion, such as the Bayesian Information Criterion (BIC) or the posterior of the graph given the data.

Another set of approaches is based on restricted functional causal models, which represent the effect as a function of the direct causes together with an independent noise term~\cite{Pearl00}.  Under appropriate assumptions, these approaches are able to identify the whole causal model. More specifically, the causal direction implied by the restricted functional causal model is generically identifiable, in that the model assumptions, such as the independence between the noise and cause, hold only for the true causal direction and are violated for the wrong direction. Examples of such restricted functional causal models include the Linear, Non-Gaussian, Acyclic Model (LiNGAM~\cite{Shimizu06}), the additive noise model~\cite{Hoyer09,Zhang09_additive}, and the post-nonlinear causal model~\cite{Zhang_UAI09}.  The method presented in~\cite{Mooij10_GPI} makes use of a certain type of smoothness of the function in the correct causal direction to distinguish cause from effect, though it does not give explicit identifiability conditions.

\subsection{With Nonstationary Causal Models}
Suppose we are working with a set of observed variables $\mathbf{V} = \{V_i\}_{i=1}^{n}$ and the underlying causal structure over $\mathbf{V}$ is represented by a DAG $G$. For each $V_i$, let $PA^i$ denote the set of parents of $V_i$ in $G$. Suppose at each point in time or in each domain, the joint probability distribution of $\mathbf{V}$ factorizes according to $G$:
\begin{equation} \label{Eq:decomp}
P(\mathbf{V}) = \prod_{i=1}^n P(V_i \,|\, PA^i).
 \end{equation}
We call each $P(V_i\,|\,PA^i)$ a causal module. If there are distribution shifts (i.e., $P(\mathbf{V})$ changes over time or across domains), at least some causal modules $P(V_k\,|\,PA^k)$, $k\in \mathcal{N}$ must change. We call those causal modules {\it nonstationary causal modules}.  Their changes may be due to changes of the involved functional models, causal strengths, noise levels, etc. We assume that those quantities that change over time or cross domains can be written as functions of a time or domain index, and denote by $C$ such an index.

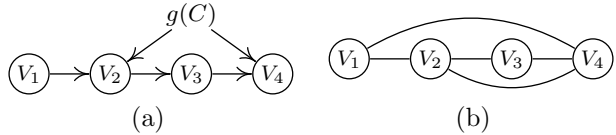
\begin{figure}[htp]
\setlength{\abovecaptionskip}{0pt}
\setlength{\belowcaptionskip}{0.5pt}
  \begin{center}
\setlength{\abovecaptionskip}{-0.2pt}
\setlength{\belowcaptionskip}{0pt}
\begin{center}
\begin{tikzpicture}[scale=.6, line width=0.5pt, inner sep=0.2mm, shorten >=.1pt, shorten <=.1pt]
\draw (0, 0) node(1) [circle, draw] {{\footnotesize\,$V_1$\,}};
  \draw (1.8, 0) node(2) [circle, draw] {{\footnotesize\,$V_2$\,}};
\draw (3.6, 0) node(3) [circle, draw] {{\footnotesize\,$V_3$\,}};
\draw (5.4, 0) node(4) [circle, draw] {{\footnotesize\,$V_4$\,}};
\draw (3.6, 1.3) node(5) {{\footnotesize\,{\small$g(C)$}\,}};
  \draw[-arcsq] (1) -- (2); 
  \draw[-arcsq] (2) -- (3); 
  \draw[-arcsq] (3) -- (4); 
  \draw[-arcsq] (5) -- (2); 
  \draw[-arcsq] (5) -- (4); 
\end{tikzpicture} ~~
\begin{tikzpicture}[scale=.6, line width=0.5pt, inner sep=0.2mm, shorten >=.1pt, shorten <=.1pt]
\draw (0, 0) node(1) [circle, draw] {{\footnotesize\,$V_1$\,}};
  \draw (1.8, 0) node(2) [circle, draw] {{\footnotesize\,$V_2$\,}};
\draw (3.6, 0) node(3) [circle, draw] {{\footnotesize\,$V_3$\,}};
\draw (5.4, 0) node(4) [circle, draw] {{\footnotesize\,$V_4$\,}};
  \draw[-] (1) -- (2); 
  \draw[-] (2) -- (3); 
  \draw[-] (3) -- (4); 
  \draw[-] (1) to[out=30,in=150] (4); 
  \draw[-] (2) to[out=-30,in=-150] (4); 
\end{tikzpicture} \\
(a) ~~~~~~~~~~~~~~~~~~~~~~~~~~~~~~~~(b)
\end{center}
\caption{An illustration on how ignoring changes in the causal model may lead to spurious connections by the constraint-based method. (a) The true causal graph (including confounder $g(C)$). (b) The estimated conditional independence graph on the observed data in the asymptotic case. }
\label{fig:illust_C}
  \end{center}
  \vspace{-10pt}
  \vspace{1pt}
\end{figure}

If the changes in some modules are related, one can treat the situation as if there exists some unobserved quantity (confounder) which influences those modules and, as a consequence, the conditional independence relationships in the distribution-shifted data will be different from those implied by the true causal structure. Therefore, standard constraint-based algorithms such as PC~\cite{Spirtes00,Pearl00} may not be able to reveal the true causal structure.  
As an illustration, suppose that the observed data were generated according to Fig.~\ref{fig:illust_C}(a), where $g(C)$, a function of $C$, is involved in the generating processes for both $V_2$ and $V_4$;  the conditional independence graph for the observed data then contains spurious connections $V_1 - V_4$ and $V_2 - V_4$,  as shown in Fig.~\ref{fig:illust_C}(b), because there is only one conditional independence relationship, $V_3 \independent V_1 \,|\, V_2$.
\begin{figure*}[htp]
\hspace{0cm}\includegraphics[width = 1.0\textwidth]{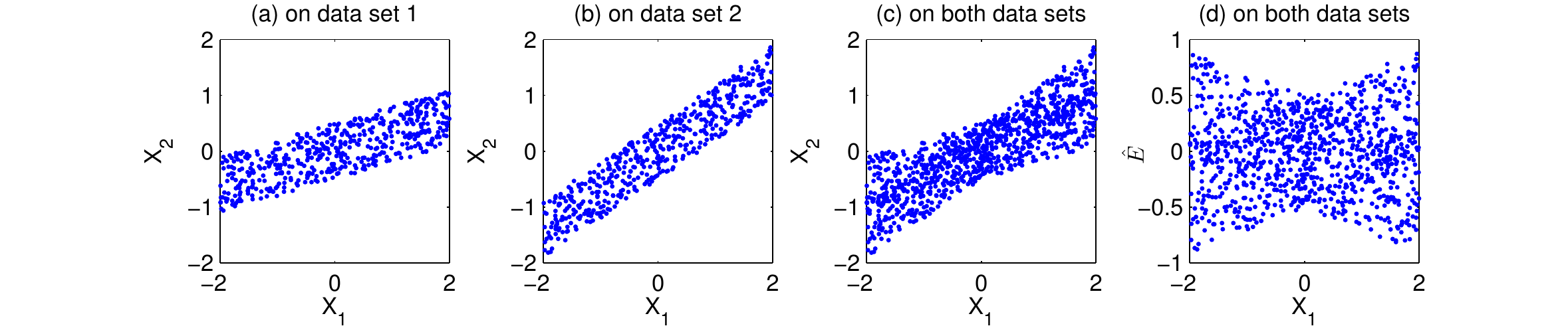} \vspace{-0.7cm}
\caption{An illustration of a failure of using the approach based on functional causal models for causal direction determination when the causal model changes. (a) Scatter plot of $V_1$ and $V_2$ on dataset 1. (b) That on dataset 2. (c) That on merged data (both datasets). (d) The scatter plot of $V_1$ and the estimated regression residual  on merged data.}
\label{fig:illust_SEM}
\end{figure*}

Moreover, when one fits a fixed functional causal model (e.g., a linear, non-Gaussian model~\cite{Shimizu06}) to distribution-shifted data, the estimated noise may not be independent from the cause any more. Consequently, the approach based on restricted functional causal models in general cannot infer the correct causal structure either.  Fig.~\ref{fig:illust_SEM} gives an illustration of this point. Suppose we have two datasets for variables $V_1$ and $V_2$: $V_2$ is generated from $V_1$ according to $V_2 = 0.3V_1 + E$ in the first and according to $V_2 = 0.7V_1 + E$ in the second, and in both datasets $V_1$ and $E$ are mutually independent and follow a uniform distribution. Fig.~\ref{fig:illust_SEM}(a - c) show the scatter plots of $V_1$ and $V_2$ on dataset 1, on dataset 2, and on merged data, respectively. (d) then shows the scatter plot of $V_1$, the cause, and the estimated regression residual on both datasets; they are not independent any more, although on either dataset the regression residual is independent from $V_1$.



To tackle the issue of changing causal models, one may try to find causal models on sliding windows ~\cite{Chronnectome} (for nonstationary data) or for different domains (for data from multiple domains) \textcolor{black}{separately}, and then compare them.  Improved versions include the online changepoint detection method~\cite{Adam07}, the online undirected graph learning~\cite{Talih05}, the locally stationary structure tracker algorithm~\cite{Kummerfeld13}.  Such methods may suffer from high estimation variance due to sample scarcity, large type II errors, and a large number of statistical tests. Some methods aim to estimate the time-varying causal model by making use of certain types of smoothness of the change~\cite{Huang15}, but they do not explicitly locate the nonstationary causal modules. Several methods aim to model time-varying time-delayed causal relations~\cite{Xing10,Song09_DBN}, which can be reduced to online parameter learning because the direction of the causal relations is given (i.e., the past influences the future).  Compared to them, learning changing instantaneous causal relations, with which we are concerned in this paper, is generally more difficult. Moreover, most of these methods assume linear causal models, limiting their applicability to complex problems with nonlinear causal relations.

In contrast, we will develop a nonparametric and computationally efficient method that can identify nonstationary causal modules and recover the causal  skeleton.
We will also show that distribution shifts actually contain useful information for the purpose of determining causal directions and develop practical algorithms accordingly.

\section{Enhanced Constraint-Based Procedure} \label{Sec:method}

\subsection{Assumptions} \label{Sec:assumptions}
As already mentioned, we allow changes in some causal modules to be related, which may be explained by positing unobserved confounders. Intuitively, such confounders may refer to some high-level background variables. For instance, for fMRI data, they may be the subject's attention or unmeasured background stimuli impinging on a subject--scanner noise, random thoughts, physical sensations, etc.; for the stock market, they may be related to economic policies and changes in the ownership among the companies, etc. Thus we do not assume causal sufficiency for the set of observed variables. However, we assume that the confounders, if any, can be written as smooth functions of time or domain index. It follows that at each time or in each domain, the values of these confounders are fixed. We call this a {\it weak causal sufficiency} assumption.

Denote by $\{g_l(C)\}_{l=1}^L$ the set of such confounders (which may be empty). We further assume that for each $V_i$ the local causal process for $V_i$ can be represented by the following structural equation model (SEM):
\begin{equation} \label{Eq:FCM}
V_i = f_i\big(PA^i, \mathbf{g}^i(C), \theta_i(C), \epsilon_i\big),
\end{equation}
where $\mathbf{g}^i(C)\subseteq \{g_l(C)\}_{l=1}^L$ denotes the set of confounders that influence $V_i$, $\theta_i(C)$ denotes the effective parameters in the model that are also assumed to be functions of $C$, and $\epsilon_i$ is a disturbance term that is independent of $C$ and has a non-zero variance (i.e., the model is not deterministic). We also assume that the $\epsilon$'s are mutually independent.

Note that $\{g_l(C)\}_{l=1}^L$ are introduced to account for changes in different causal modules that are not independent. As a result, although $\theta_i(C)$ may also contribute to a change in the causal module for $V_i$, changes to the module for $V_i$ due to $\theta_i(C)$ are independent of changes to the module for $V_j$ due to $\theta_j(C)$, $i\neq j$. In other words, $\theta_i(C)$ is specific to $V_i$ and is independent of $\theta_j(C)$ for $i\neq j$. Note that $\mathbf{g}^i(C)$ and $\theta_i(C)$ can be constant, corresponding to stationary causal modules.

In this paper we treat $C$ as a random variable, and so there is a joint distribution over $\mathbf{V}\cup\{g_l(C)\}_{l=1}^L\cup\{\theta_m(C)\}_{m=1}^n$. We assume that this distribution is Markov and faithful to the graph resulting from the following additions to $G$ (which, recall, is the causal structure over $\mathbf{V}$): add $\{g_l(C)\}_{l=1}^L\cup\{\theta_m(C)\}_{m=1}^n$ to $G$, and for each $i$, add an arrow from each variable in $\mathbf{g}^i(C)$ to $V_i$ and add an arrow from $\theta_i(C)$ to $V_i$. We will refer to this augmented graph as $G^{aug}$. Obviously $G$ is simply the induced subgraph of $G^{aug}$ over $\mathbf{V}$.


\subsection{Detecting Changing Modules and Recovering Causal Skeleton}
In this section we propose a method to detect variables whose modules change and infer the skeleton of $G$. The basic idea is simple: we use the (observed) variable $C$ as a surrogate for the unobserved $\mathbf{V}\cup \{g_l(C)\}_{l=1}^L$, or in other words, we take $C$ to capture \textit{C-specific} information. \footnote{Recall that $C$ may simply be time. Thus in this paper we take time to be a special random variable which follows a uniform distribution over the considered time period, with the corresponding data points evenly sampled at a certain sampling frequency. We realize that this view of time will invite philosophical questions, but for the purpose of this paper, we will set those questions aside. One can regard this stipulation as purely a formal device without substantial implications on time {\it per se}.} We now show that given the assumptions in \ref{Sec:assumptions}, we can apply a constraint-based algorithm to $\mathbf{V}\cup\{ C\}$ to detect variables with changing modules and recover the skeleton of $G$.

\begin{algorithm}
\caption{Detection of Changing Modules and Recovery of Causal Skeleton}
\begin{enumerate}
\item Build a complete undirected graph $U_C$ on the variable set $ \mathbf{V} \cup \{ C\}$.
\item (\textit{Detection of changing modules}) For every $i$, test for the marginal and conditional independence between $V_{i}$ and $C$. If they are independent given a subset of $\{V_k\,|\,k\neq i\}$, remove the edge between $V_{i}$ and $C$ in $ U_C $.
\item (\textit{Reovery of causal skeleton}) For every $i\neq j$, test for the marginal and conditional independence between $V_{i}$ and $V_{j}$. If they are independent given a subset of $\{V_k\,|\,k\neq i, k\neq j\}\cup \{ C\}$, remove the edge between $V_{i}$ and $V_{j}$ in $ U_C $. 
\end{enumerate}
\label{rs_causal_discovery}
\end{algorithm}

The procedure is briefly described in Algorithm 1. It outputs an undirected graph, $U_C$, that contains $C$ as well as $\mathbf{V}$. In Step 2, whether a variable $V_i$ has a changing module is decided by whether $V_i$ and $C$ are independent conditional on some subset of other variables. The justification for one side of this decision is trivial. If $V_i$'s module does not change, that means $P(V_i \,|\, PA^i)$ remains the same for every value of $C$, and so $V_i \independent C \, | \, PA^i$. Thus, if $V_i$ and $C$ are not independent conditional on any subset of other variables, $V_i$'s module changes with $C$, which is represented by an edge between $V_i$ and $C$. Conversely, we assume that if $V_i$'s module changes, which entails that $V_i$ and $C$ are not independent given $PA^i$, then $V_i$ and $C$ are not independent given any other subset of $\mathbf{V}\backslash \{ V_i\}$. If this assumption does not hold, then we only claim to detect some (but not necessarily all) variables with changing modules.

Step 3 aims to discover the skeleton of the causal structure over $\mathbf{V}$. Its (asymptotic) correctness is justified by the following theorem:
\begin{Theorem} \label{theo1}
Given the assumptions made in Section~\ref{Sec:assumptions}, for every $V_i, V_j\in \mathbf{V}$, $V_i$ and $V_j$ are not adjacent in $G$ if and only if they are independent conditional on some subset of $\{V_k\,|\,k\neq i, k\neq j\}\cup \{ C\}$.
\end{Theorem}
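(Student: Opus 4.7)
My plan is to reduce the theorem to the Markov and faithfulness hypotheses on $G^{aug}$ via a ``transfer lemma'' stating that conditioning on $C$ is, for pairs of $V$-nodes, interchangeable with conditioning on the entire collection of deterministic functions $\{g_l(C)\}_l\cup\{\theta_m(C)\}_m$. Concretely, for any $S\subseteq\mathbf{V}\setminus\{V_i,V_j\}$ I would show
\begin{equation*}
V_i\independent V_j\,\big|\,S\cup\{C\}\iff V_i\independent V_j\,\big|\,S\cup\{g_l(C)\}_{l=1}^{L}\cup\{\theta_m(C)\}_{m=1}^{n}.
\end{equation*}
Two observations justify this: (i) the $g_l$'s and $\theta_m$'s are deterministic functions of $C$, so adjoining them to a conditioning set that already contains $C$ is redundant; (ii) conditional on $\{g_l(C)\}_l\cup\{\theta_m(C)\}_m$, the SEM~(\ref{Eq:FCM}) makes $\mathbf{V}$ a function only of the disturbances $\{\epsilon_k\}_k$, which are independent of $C$ by assumption, so $C$ can be dropped from the augmented conditioning set without altering the CI.

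With the transfer lemma in hand, the ``only if'' direction proceeds by picking, WLOG, an ordering in which $V_i$ is a non-descendant of $V_j$ in $G$ (possible because $G$ is a DAG and $V_i\neq V_j$) and taking $S=PA^j\cup\{C\}$. Non-adjacency of $V_i,V_j$ in $G$ gives $V_i\notin PA^j$, so $S\subseteq(\mathbf{V}\setminus\{V_i,V_j\})\cup\{C\}$ as required. By~(\ref{Eq:FCM}), conditional on $PA^j\cup\{C\}$ the variable $V_j$ is a pure function of $\epsilon_j$, whereas $V_i$, being a non-descendant of $V_j$, is a function of $C$ and some subset of $\{\epsilon_k:k\neq j\}$. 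Mutual independence of the $\epsilon_k$'s then produces $V_i\independent V_j\,|\,PA^j\cup\{C\}$.

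For the ``if'' direction, suppose $V_i$ and $V_j$ are adjacent in $G$; they remain adjacent in $G^{aug}$ since $G^{aug}$ only adds edges out of the $g_l$'s and $\theta_m$'s. Given any $S\subseteq(\mathbf{V}\cup\{C\})\setminus\{V_i,V_j\}$, the transfer lemma (when $C\in S$) or the trivial inclusion $S\subseteq\mathbf{V}$ (when $C\notin S$) produces an equivalent conditioning set $T$ that lies within the node set of $G^{aug}$ and avoids $\{V_i,V_j\}$. Faithfulness of the joint distribution to $G^{aug}$ then forbids $V_i\independent V_j\,|\,T$, and the transfer lemma pushes this dependence back to the original $S$. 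The chief technical hurdle is the transfer lemma itself, which is the only place where the deterministic nature of $g_l(C)$ and $\theta_m(C)$ plays an essential role; once it is available, the remainder is a short reduction to the Markov and faithfulness properties assumed for $G^{aug}$ in Section~\ref{Sec:assumptions}.
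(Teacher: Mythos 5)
Your proposal is correct, and its engine---the interchangeability of conditioning on $C$ with conditioning on $\{g_l(C)\}_{l=1}^L\cup\{\theta_m(C)\}_{m=1}^n$, followed by an appeal to the Markov/faithfulness assumptions on $G^{aug}$---is the same engine the paper uses. The packaging differs in two respects. First, you isolate the interchange as a single ``transfer lemma'' stated as an equivalence; the paper instead proves the two directions separately, establishing $C \independent (V_i,V_j)\,|\,\mathbf{V}^{ij}\cup\{g_l(C)\}\cup\{\theta_m(C)\}$ from the SEM (its Eq.~\ref{Eq:Cind1}) and then using, in one direction, explicit density substitutions (Eqs.~\ref{tmp1}--\ref{tmp3}) and, in the other, a contraction-type property cited from~\cite{Madiman08} plus weak union; your justifications (i) and (ii) are exactly these two ingredients, and the lemma formulation is arguably cleaner. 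Second, for the ``only if'' direction you do not invoke the assumed Markov property of $G^{aug}$ at all: you exhibit the concrete separating set $PA^j\cup\{C\}$ (with $V_i$ a non-descendant of $V_j$) and derive the independence directly from Eq.~\ref{Eq:FCM} and the mutual independence of the disturbances, whereas the paper argues abstractly via d-separation in $G^{aug}$. Your route is more constructive and shows that half of the theorem needs only the SEM, not the Markov postulate; the paper's route is shorter given that the Markov property is assumed anyway. One step you leave implicit and should spell out: when you condition on $PA^j\cup\{C\}$ and claim $V_j$ is a function of $\epsilon_j$ alone while $V_i$ is a function of $C$ and $\{\epsilon_k\}_{k\neq j}$, you still need that the conditioning event does not couple $\epsilon_j$ to the other disturbances---this holds because $PA^j$ consists of non-descendants of $V_j$ and is therefore measurable with respect to $(C,\{\epsilon_k\}_{k\neq j})$, which is jointly independent of $\epsilon_j$. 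With that observation added, the argument is complete.
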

\begin{proof}
Before getting to the main argument, let us establish some implications of the SEMs Eq.~\ref{Eq:FCM} and the assumptions in Section~\ref{Sec:assumptions}. Since the structure is assumed to be acyclic or recursive, according to Eq.~\ref{Eq:FCM}, all variables $V_i$ can be written as a function of $\{g_l(C)\}_{l=1}^L \cup \{\theta_m(C)\}_{m=1}^n$ and $\{\epsilon_m\}_{m=1}^n$. As a consequence, the probability distribution of $\mathbf{V}$ at each value of $C$ is determined by the distribution of $\epsilon_1, ..., \epsilon_n$, and the values of $\{g_l(C)\}_{l=1}^L \cup \{\theta_m(C)\}_{m=1}^n$. In other words, $p(\mathbf{V}|C)$ is determined by $\prod_{i=1}^n p(\epsilon_i)$ (for $\epsilon_1, ..., \epsilon_n$ are mutually independent), and $\{g_l(C)\}_{l=1}^L \cup \{\theta_m(C)\}_{m=1}^n$, where $p(\cdot)$ denotes the probability density or mass function. For any $V_i$, $V_j$, and $\mathbf{V}^{ij}\subseteq \{V_k\,|\,k\neq i, k\neq j\}$, because $p(V_i, V_j \,|\,  \mathbf{V}^{ij}, C)$ is determined by $p(\mathbf{V}|C)$, it is also determined by $\prod_{i=1}^n p(\epsilon_i)$ and $\{g_l(C)\}_{l=1}^L \cup \{\theta_m(C)\}_{m=1}^n$. Since $\prod_{i=1}^n p(\epsilon_i)$ does not change with $C$, we have
\begin{flalign} \nonumber
&p(V_i, V_j \,|\,  \mathbf{V}^{ij} \cup \{g_l(C)\}_{l=1}^L \cup \{\theta_m(C)\}_{m=1}^n \cup \{C\})  \\ \label{Eq:Cind1}
=& p(V_i, V_j \,|\,  \mathbf{V}^{ij} \cup \{g_l(C)\}_{l=1}^L \cup \{\theta_m(C)\}_{m=1}^n).
\end{flalign}
That is,
\begin{equation} \label{Eq:Cind}
C \independent (V_i, V_j) \,|\,  \mathbf{V}^{ij} \cup \{g_l(C)\}_{l=1}^L \cup \{\theta_m(C)\}_{m=1}^n.
\end{equation}
By the weak union property of conditional independence, it follows that
\begin{equation} \label{Eqc}
C \independent V_j \,|\,\{V_i\}\cup \mathbf{V}^{ij} \cup \{g_l(C)\}_{l=1}^L \cup \{\theta_m(C)\}_{m=1}^n.
\end{equation}

We are now ready to prove the theorem. Let $V_i, V_j$ be any two variables in $\mathbf{V}$.
First, suppose that $V_i$ and $V_j$ are not adjacent in $G$. Then they are not adjacent in $G^{aug}$, which recall is the graph that incorporates $\{g_l(C)\}_{l=1}^L \cup \{\theta_m(C)\}_{m=1}^n$. It follows that there is a set $\mathbf{V}^{ij}\subseteq \{V_k\,|\,k\neq i, k\neq j\}$ such that $\mathbf{V}^{ij} \cup \{g_l(C)\}_{l=1}^L \cup \{\theta_m(C)\}_{m=1}^n$ d-separates $V_i$ from $V_j$. Since the joint distribution over $\mathbf{V} \cup \{g_l(C)\}_{l=1}^L \cup \{\theta_m(C)\}_{m=1}^n$ is assumed to be Markov to $G^{aug}$, we have
\begin{equation} \label{Eqd}
V_i \independent V_j \,|\,  \mathbf{V}^{ij} \cup \{g_l(C)\}_{l=1}^L \cup \{\theta_m(C)\}_{m=1}^n.
\end{equation}
Because all $g_l(c)$ and $\theta_m(C)$ are deterministic functions of $C$, we have $p(V_i, V_j \,|\,  \mathbf{V}^{ij} \cup \{C\}) = p(V_i, V_j \,|\,  \mathbf{V}^{ij} \cup \{g_l(C)\}_{l=1}^L \cup \{\theta_m(C)\}_{m=1}^n \cup \{C\})$.

According to~\cite{Madiman08} , Eqs.~\ref{Eqd} and~\ref{Eq:Cind} imply
$V_i \independent (C, V_j) \,|\,  \mathbf{V}^{ij} \cup \{g_l(C)\}_{l=1}^L \cup \{\theta_m(C)\}_{m=1}^n$.
By the weak union property of conditional independence, it follows that
$V_i \independent V_j \,|\,  \mathbf{V}^{ij} \cup \{g_l(C)\}_{l=1}^L \cup \{\theta_m(C)\}_{m=1}^n \cup \{C\}$.
As all $g_l(C)$ and $\theta_m(C)$ are deterministic functions of $C$, it follows that
$V_i \independent V_j \,|\,  \mathbf{V}^{ij} \cup \{C\}$.
In other words, $V_i$ and $V_j$ are conditionally independent given a subset of $\{V_k\,|\,k\neq i, k\neq j\}\cup \{C\}$.

Conversely, suppose $V_i$ and $V_j$ are conditionally independent given a subset $\mathbf{S}$ of $\{V_k\,|\,k\neq i, k\neq j\}\cup \{C\}$. We show that $V_i$ and $V_j$ are not adjacent in $G$, or equivalently, that they are not adjacent in $G^{aug}$. There are two possible cases to consider:

\begin{itemize}
\item Suppose $\mathbf{S}$ does not contain $C$. Then since the joint distribution over $\mathbf{V} \cup \{g_l(C)\}_{l=1}^L \cup \{\theta_m(C)\}_{m=1}^n$ is assumed to be Faithful to $G^{aug}$, $V_i$ and $V_j$ are not adjacent in $G^{aug}$, and hence not adjacent in $G$.

\item Otherwise, $\mathbf{S}=\mathbf{V}^{ij}\cup \{C\}$ for some $\mathbf{V}^{ij}\subseteq \{V_k\,|\,k\neq i, k\neq j\}$. That is,
\begin{flalign} \label{tmp0}
&V_i \independent V_j \,|\,  \mathbf{V}^{ij} \cup \{C\},\textrm{ or } \\ \nonumber
& p(V_i, V_j \,|\,  \mathbf{V}^{ij} \cup \{C\}) = p(V_i \,|\,  \mathbf{V}^{ij} \cup \{C\}) p(V_j \,|\,  \mathbf{V}^{ij} \cup \{C\}).
\end{flalign}

According to Eq.~\ref{Eq:Cind1}, and also noting that $\{g_l(C)\}_{l=1}^L \cup \{\theta_m(C)\}_{m=1}^n$ is a deterministic function of $C$, we have
\begin{equation}\label{tmp1}
p(V_i, V_j \,|\,  \mathbf{V}^{ij}\cup \{C\}) = p(V_i, V_j \,|\,  \mathbf{V}^{ij} \cup \{g_l(C)\}_{l=1}^L \cup \{\theta_m(C)\}_{m=1}^n),
\end{equation}
which also implies
\begin{flalign} \label{tmp2}
&p(V_i \,|\, \mathbf{V}^{ij}\cup \{C\}) = p(V_i \,|\,  \mathbf{V}^{ij} \cup \{g_l(C)\}_{l=1}^L \cup \{\theta_m(C)\}_{m=1}^n), \\ \label{tmp3}
& p(V_j \,|\,  \mathbf{V}^{ij}\cup  \{C\}) = p(V_j \,|\,  \mathbf{V}^{ij} \cup \{g_l(C)\}_{l=1}^L \cup \{\theta_m(C)\}_{m=1}^n).
\end{flalign}
Substituting Eqs.~\ref{tmp1} -~\ref{tmp3} into Eq.~\ref{tmp0} gives
\begin{flalign}
&p(V_i, V_j \,|\,  \mathbf{V}^{ij} \cup \{g_l(C)\}_{l=1}^L \cup \{\theta_m(C)\}_{m=1}^n) \\ \nonumber
=& p(V_i \,|\,  \mathbf{V}^{ij} \cup \{g_l(C)\}_{l=1}^L \cup \{\theta_m(C)\}_{m=1}^n) p(V_j \,|\,  \mathbf{V}^{ij} \cup \{g_l(C)\}_{l=1}^L \cup \{\theta_m(C)\}_{m=1}^n).
\end{flalign}
That is,
$$V_i \independent V_j \,|\,  \mathbf{V}^{ij} \cup \{g_l(C)\}_{l=1}^L \cup \{\theta_m(C)\}_{m=1}^n.$$
Again, by the Faithfulness assumption on $G^{aug}$, this implies that $V_i$ and $V_j$ are not adjacent in $G^{aug}$ and hence are not adjacent in $G$.
\end{itemize}

Therefore, $V_i$ are $V_j$ are not adjacent in $G$ if and only if they are conditionally independent given some subset of $\{V_k\,|\,k\neq i, k\neq j\}\cup \{C\}$.


\end{proof}


In the above procedure, it is crucial to use a general, nonparametric conditional independence test, for how variables depend on $C$ is unkown and usually very nonlinear. In this work, we use the kernel-based conditional independence test (KCI-test~\cite{Zhang11_KCI}) to capture the dependence on $ C $ in a nonparametric way. By contrast, if we use, for example, tests of vanishing partial correlations, as is widely used in the neuroscience community, the proposed method will not work well.


\section{An Advantage of Nonstationarity in Determination of Causal Direction} \label{Sec:unify}
We now show that using the additional variable $C$ as a surrogate not only allows us to infer the skeleton of the causal structure, but also facilitates the determination of some causal directions. Let us call those variables that are adjacent to $C$ in the output of Algorithm 1 ``$C$-specific variables'', which are actually the effects of nonstationary causal modules. For each $C$-specific variable $V_k$, it is possible to determine the direction of every edge incident to $V_k$, or in other words, it is possible to infer $PA^k$. Let $V_l$ be any variable adjacent to $V_k$ in the output of Algorithm 1. There are two possible cases to consider:
\begin{enumerate}

\item $V_l$ is not adjacent to $C$. Then $C- V_k - V_l$ forms an unshielded triple in the skeleton. For practical purposes, we can take the direction between $C$ and $V_k$ as $C \rightarrow V_k$ (though we do not claim $C$ to be a cause in any substantial sense). Then we can use the standard orientation rules for unshielded triples to orient the edge between $V_k$ and $V_l$ ~\cite{Spirtes00,Pearl00}: if $V_l$ and $C$ are independent given a set of variables excluding $V_k$, then the triple is a V-structure, and we have $V_k \leftarrow V_l$.  Otherwise, if $V_l$ and $C$ are independent given a set of variables including $V_k$, then the triple is not a V-structure, and we have $V_k \rightarrow V_l$.

\item $V_l$ is also adjacent to $C$. This case is more complex than Case 1, but it is still possible to identify the causal direction between $V_k$ and $V_l$, based on the principle that $P(\texttt{cause})$ and $P(\texttt{effect}\,|\,\texttt{cause})$ change independently; a heuristic method is given in Section~\ref{Sec:general}. 
\end{enumerate}

The procedure in Case 1 contains the methods proposed in~\cite{Hoover90,Tian01} for causal discovery from changes as special cases, which may also be interpreted as special cases of the principle underlying the method for Case 2:  
if one of $P(\texttt{cause})$ and $P(\texttt{effect}\,|\,\texttt{cause})$ changes while the other remains invariant, they are clearly independent.

\subsection{Inference of the Causal Direction between Variables with Changing Modules} \label{Sec:general}

We now develop a heuristic method to deal with Case 2 above. For simplicity, let us start with the two variable case: suppose $V_1$ and $V_2$ are adjacent and are both adjacent to $C$ (and not adjacent to any other variable). We aim to identify the causal direction between them, which, without loss of generality, we suppose to be $V_1\rightarrow V_2$. The guiding idea is that nonstationarity may carry information that confirms ``independence'' of causal modules, which, in the simple case we are considering, is the ``independence'' between $P(V_1)$ and $P(V_2|V_1)$. If $P(V_1)$ and $P(V_2|V_1)$ are ``independent'' but $P(V_2)$ and $P(V_1|V_2)$ are not, then the causal direction is inferred to be from $V_1$ to $V_2$. The idea that causal modules are ``independent'' is not new, but in a stationary situation where each module is fixed, such independence is very difficult, if not impossible, to test. By contrast, in the situation we are considering presently, both $P(V_1)$ and $P(V_2|V_1)$ are nonstationary, and we can try to measure the extent to which variation in $P(V_1)$ and variation in $P(V_2)$ are dependent (and similarly for $P(V_2)$ and $P(V_1|V_2)$). This is the sense in which nonstationarity actually helps in the inference of causal directions, and as far as we know, this is the first time that such an advantage is exploited in the case where both $P(\texttt{cause})$ and $P(\texttt{effect}\,|\,\texttt{cause})$ change.

We now derive a method along this line. Note that although both of $V_1$ and $V_2$ are adjacent to $C$, there does not necessarily exist a confounder. Fig.~\ref{fig:illust_direct}(a) shows the case where the involved changing parameters, $\theta_1(C)$ and $\theta_2(C)$ are independent, i.e., $P(V1;\theta_1)$ and $P(V2\,|\,V_1;\theta_2)$ change independently. (We dropped the argument $C$ in $\theta_1$ and $\theta_2$ to simplify notations.)

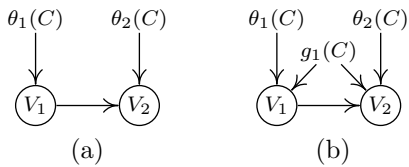
\begin{figure}[htp]
 \begin{center}
\hspace{-.3cm}
\begin{tikzpicture}[scale=.6, line width=0.5pt, inner sep=0.2mm, shorten >=.1pt, shorten <=.1pt]
\draw (.5, 0) node(1) [circle, draw] {{\footnotesize\,$V_1$\,}};
  \draw (2.8, 0) node(2) [circle, draw] {{\footnotesize\,$V_2$\,}};
  \draw (.5, 1.9) node(3)  {{\footnotesize\,$\theta_1(C)$\,}};
    \draw (2.8, 1.9) node(4)  {{\footnotesize\,$\theta_2(C)$\,}};
  \draw[-arcsq] (1) -- (2); 
  \draw[-arcsq] (3) -- (1); 
    \draw[-arcsq] (4) -- (2); 
\end{tikzpicture}~~~~~~~
\begin{tikzpicture}[scale=.6, line width=0.5pt, inner sep=0.2mm, shorten >=.1pt, shorten <=.1pt]
\draw (.5, 0) node(1) [circle, draw] {{\footnotesize\,$V_1$\,}};
  \draw (2.8, 0) node(2) [circle, draw] {{\footnotesize\,$V_2$\,}};
  \draw (.5, 1.9) node(3)  {{\footnotesize\,$\theta_1(C)$\,}};
    \draw (2.8, 1.9) node(4)  {{\footnotesize\,$\theta_2(C)$\,}};
        \draw (1.65, 1.2) node(5)  {{\footnotesize\,$g_1(C)$\,}};
  \draw[-arcsq] (1) -- (2); 
  \draw[-arcsq] (3) -- (1); 
    \draw[-arcsq] (4) -- (2); 
    \draw[-arcsq] (5) -- (1); 
    \draw[-arcsq] (5) -- (2); 
\end{tikzpicture}
\\(a)~~~~~~~~~~~~~~~~~~~~~~~~(b)~~\\
 \end{center}
\caption{Two possible situations where $V_1 \rightarrow V_2$ and both $V_1$ and $V_2$ are adjacent to $C$. (a)  $\theta_1(C) \independent \theta_2(C)$.  (b) In addition to the changing parameters, there is a confounder $g_1(C)$ underlying $V_1$ and $V_2$.} \label{fig:illust_direct}
\end{figure} 

For the reverse direction, one can  decompose the joint distribution of $(V_1,V_2)$ according to
\begin{equation} \label{Eq:reverse}
P(V_1,V_2;\theta_1', \theta_2') = P(V_2;\theta_2')P(V_1\,|\,V_2;\theta_1'),
\end{equation}
where $\theta_1'$ and $\theta_2'$ are assumed to be sufficient for the corresponding distribution terms. Generally speaking, $\theta_1'$ and $\theta_2'$ are not independent, because they are determined jointly by both $\theta_1$ and $\theta_2$. We assume that this is the case, and identify the direction between $V_1$ and $V_2$ based on this assumption.

Now we face two problems. First, how can we compare the dependence between $\theta_1$ and $\theta_2$ and that between between $\theta'_1$ and $\theta'_2$?  Second, in our nonparametric setting, we do not really have such parameters. How can we compare  the dependence based on the given data?  

For the first problem, we make use of the following measures of contributions from the parameters. The total contribution (in a way analogous to causal effect; see~\cite{effect13}) from $\theta'_1$ and $\theta'_2$ to $(V_1, V_2)$ can be measured with mutual information:
\begin{flalign}\nonumber
&\mathcal{S}_{(\theta'_1,\theta'_2) \rightarrow (V_1, V_2)} = I\big((\theta'_1,\theta'_2); (V_1, V_2)\big)  \\ \nonumber
=& I(\theta'_2; V_2) + I(\theta'_1; V_1 \,|\, V_2) + I(\theta'_2; V_1\,|\, \theta_1', V_2) \\ \label{Eq:MI}
=& I(\theta'_2; V_2) + I(\theta'_1; V_1 \,|\, V_2),
\end{flalign}
where the second equality holds because of the chain rule, and the last one because the sufficiency of $\theta_1'$ for $P(V_1\,|\,V_2;\theta_1')$ implies $\theta'_2 \independent V_1\,|\, \theta_1', V_2$. Eq.~\ref{Eq:MI} involves the regular mutual information and conditional mutual information.

Since $\theta'_1$ and $\theta'_2$ are dependent, their individual contributions to $(V_1,V_2)$ are redundant.  Below we calculate the individual contributions. The contribution from $\theta_2'$ to $V_2$ is
$ \mathcal{S}_{\theta'_2 \rightarrow V_2} =  I(\theta'_2; V_2).$
The contribution from $\theta_1'$ to $V_1$ has been derived in~\cite{effect13}:
$ \mathcal{S}_{\theta'_1 \rightarrow V_1} =  \mathbb{E} \Big[\log \frac{P(V_1\,|\,V_2,\theta_1')}{\int P(V_1\,|\,V_2,\tilde{\theta}_1') P(\tilde{\theta}_1') \textrm{d}\tilde{\theta}_1' }\Big]$,
where $\tilde{\theta}_1'$ is an independent copy of ${\theta}'_1$.
As a consequence, the redundancy in the contributions from  $\theta'_1$ and $\theta'_2$ is
\begin{flalign} \nonumber
\Delta_{V_2 \rightarrow V_1} &= \mathcal{S}_{\theta'_2 \rightarrow V_2} + \mathcal{S}_{\theta'_1 \rightarrow V_1} - \mathcal{S}_{(\theta'_1,\theta'_2) \rightarrow (V_1, V_2)} \\ \nonumber
& = \mathbb{E}\Big[\log \frac{P(V_1\,|\,V_2)}{\int P(V_1\,|\,V_2,\tilde{\theta}_1') P(\tilde{\theta}_1') \textrm{d}\tilde{\theta}_1'} \Big] = \mathbb{E}\Big[ \log \frac{P(V_1\,|\,V_2)}{ \mathbb{E}_{\tilde{\theta}_1'}P(V_1\,|\,V_2,\tilde{\theta}_1') } \Big].
\end{flalign}
$\Delta_{V_2 \rightarrow V_1}$ is always non-negative because it is a Kullback-Leibler divergence. One can verify that if $\theta_1' \independent \theta_2'$, which implies $\theta_1' \independent V_2$, we have $\int P(V_1\,|\,V_2,\tilde{\theta}_1') P(\tilde{\theta}_1') \textrm{d}\tilde{\theta}_1' = \int P(V_1\,|\,V_2,\tilde{\theta}_1') P(\tilde{\theta}_1'\,|\,V_2) \textrm{d}\tilde{\theta}_1' = P(V_1\,|\,V_2)$, leading to $\Delta_{V_2 \rightarrow V_1} = 0$.

$\Delta_{V_2 \rightarrow V_1}$ provides a way to measure the dependence between $\theta_1'$ and $\theta_2'$. Regarding the second problem mentioned above, since we do not have parametric models, we propose to estimate $\Delta_{V_2 \rightarrow V_1}$ from the data by:
\begin{equation} \label{Eq:emp}
\hat{\Delta}_{V_2 \rightarrow V_1} = \Big\langle \log \frac{\bar{P}(V_1\,|\,V_2)}{ \langle\hat{P}(V_1\,|\,V_2) \rangle} \Big\rangle,
\end{equation}
where $\langle\cdot\rangle$ denotes the sample average, $\bar{P}(V_1\,|\,V_2)$ is the empirical estimate of ${P}(V_1\,|\,V_2)$ on all data points, and $\langle\hat{P}(V_1\,|\,V_2)\rangle$ denotes the sample average of $\hat{P}(V_1\,|\,V_2)$, the estimate of ${P}(V_1\,|\,V_2)$ at each time (or in each domain).  In our implementation, we used kernel density estimation (KDE) on all data points to estimate $\bar{P}(V_1\,|\,V_2)$, and used KDE on sliding windows (or in each domain) to estimate $\hat{P}(V_1\,|\,V_2)$. We take the direction for which $\hat{\Delta}$ is smaller to be the causal direction.

If there is a confounder $g_1(C)$ underlying $V_1$ and $V_2$, as shown in Fig.~\ref{fig:illust_direct}(b), we conjecture that the above approach still works if the influences from $g_1(C)$ is not very strong, for the following reason: for the correct direction, $\hat{\Delta}$ measures the influence from the confounder; for the wrong direction, it measures the influence from the confounder and the dependence in the ``parameters" caused by the wrong causal direction.  A future line of research is to seek a more rigorous theoretical justification of this method.
When there are more than two variables which are connected to $C$ and inter-connected, we try all possible causal structures and choose the one that minimizes the total $\hat{\Delta}$ value, i.e., $\sum_{i:PA^i\neq \emptyset} \hat{\Delta}_{PA^i \rightarrow V_i}$.



\section{Kernel Nonstationarity Visualization of Causal Modules} \label{Sec:vis}


It is informative to determine for which variable the causal model (data-generating process), or $P(V_i\,|\,PA^i)$, changes. But usually it is not enough -- one often wants to interpret the pattern of the changes, find what causes the changes, and understand the causal process in more detail. To achieve so, it is necessary to discover how the causal model changes, i.e., where the changes occur and how fast it changes, and visualize the changes. Although the changes occur in the conditional distribution $P(V_i\,|\,PA^i)$, usually it is not straightforward to see the properties of the changes by directly looking at the distribution itself. A low-dimensional representation of the changes is needed.

In the parametric case, if we know which parameters of the causal model $PA^i \rightarrow V_i$ are changing, which could be the mean of a root cause, the coefficients in a linear SEM, etc., then we can estimate such parameters for different values of $C$ and see how they change. However, such knowledge is usually not available, and more importantly, for the sake of flexibility we often model the causal processes nonparametrically. Therefore, it is desirable to develop a general nonparametric procedure for nonstationarity visualization of changing causal modules.

Note that changes in $P({V_i\,|\,PA^i})$ are irrelevant to changes in $P(PA^i)$, and accordingly, they are not necessarily the same as changes in the joint distribution $P({V_i, PA^i})$. (If $V_i$ is a root cause, $PA^i$ is an empty set, and $P({V_i\,|\,PA^i})$ reduces to the marginal distribution $P(V_i)$.) 
We aim to find a mapping of $P({V_i\,|\,PA^i})$ which captures its nonstationarity:
\begin{equation} \label{Eq:def_lambda}
\lambda_i(C) = h_i(P({V_i\,|\,PA^i, C})).
\end{equation}
We call $\lambda_i(C)$ the nonstationarity encapsulator for $P({V_i\,|\,PA^i, C})$.  This formulation is rather general: any identifiable parameters in $P({V_i\,|\,PA^i, C})$ can be expressed this way, and in the nonparametric case, $\lambda_i(C)$ can be seen as a statistic to summarize changes in $P({V_i\,|\,PA^i, C})$ along with $C$.
If $P({V_i\,|\,PA^i, C})$ does not change along with $C$, then $\lambda_i(C)$ remains constant. Otherwise, 
$\lambda_i(C)$ is intended to capture the variability of $P({V_i\,|\,PA^i, C})$ across different values of $C$.

Now there are two problems to solve. One is given only observed data, not the conditional distribution, how to represent $\lambda_i(C)$ in Eq.~\ref{Eq:def_lambda} conveniently.  The other is what criterion and method to use to enable $\lambda_i(C)$ to capture the variability in the conditional distribution along with $C$.
We tackle the above two problems by making use of kernels~\cite{Scholkopf02}, and accordingly propose a method called kernel nonstationarity visualization (KNV) of causal modules.

\subsection{Using Kernel Embedding of Conditional Probabilities}
We use the kernel embedding of conditional distributions~\cite{Song09_embedding} instead of the original conditional distributions. Suppose we have kernels $k^{(1)}_X$ and $k^{(1)}_Y$ for variables $X$ and $Y$, with the corresponding Reproducing Kernel Hilbert Spaces (RKHS) $\mathcal{H}^{(1)}_X$ and $\mathcal{H}^{(1)}_Y$, respectively.
Given conditional distribution $P(Y|X)$, its kernel embedding can be seen as an operator mapping from $\mathcal{H}^{(1)}_X$ to $\mathcal{H}^{(1)}_Y$, defined as
$\mathcal{U}_{Y|X} = \mathcal{C}_{YX} \mathcal{C}^{-1}_{XX}$, where $\mathcal{C}_{YX}$ and $\mathcal{C}_{XX}$ denote the (uncentered) cross-covariance and covariance operators, respectively~\cite{Fukumizu04dimensionalityreduction}.  The empirical
estimate of $\mathcal{U}_{Y|X}$ is $\hat{\mathcal{U}}_{Y|X} = \boldsymbol{\Psi}_Y (K_X + \beta I)^{-1} \boldsymbol{\Psi}_X^\intercal$, where $\beta$ is a regularization parameter (set to 0.05 in our experiments), and $\boldsymbol{\Psi}_Y$, $\boldsymbol{\Psi}_X$, and $K_X$ are the feature matrix on $Y$, feature matrix on $X$, and the kernel matrix on $X$, respectively~\cite{Song09_embedding}. We use the Gaussian kernel for $k^{(1)}_X$  and $k^{(1)}_Y$ with kernel width $\sigma_1$, and  $\hat{\mathcal{U}}_{Y|X}$ encodes the information of $P(Y\,|\,X)$ on the given data.

In our problem, we need consider the kernel conditional distribution embedding of $P(V_i\,|\,PA^i)$ {\it for each value of $C$}. If $C$ is a domain index, for each value of $C$ we have a dataset of $(V_i\,|\,PA^i)$. If $C$ is a time index, we use a sliding window to find the data corresponding to $C=c$, by using the data of $(V_i, PA^i)$ in the window of length $L$ centered at $c$.
 As we shall see later, It is possible to avoid directly calculating the empirical estimate of the embedding,  but we need the following (``cross") kernel (or Gram) matrices:
$K_{V_i}({c,c'})$ is the ``cross" kernel matrix between the values of $V_i$ corresponding to $C=c$ and those corresponding to $C=c'$, and similarly for $K_{PA^i}({c,c'})$.



\subsection{Nonstationary Encapsulator Extraction by Eigenvalue Decomposition}

Next, in principle, we use the estimated kernel embedding of conditional distributions, $\hat{\mathcal{U}}_{V_i\,|\,PA^i,C=c}$, as input, and aim to find $\hat{\lambda}_i(c)$ as a (nonlinear) mapping of $\hat{\mathcal{U}}_{V_i\,|\,PA^i, C=c}$, to capture its variability across different $c$. This can be readily achieved by exploiting some nonlinear principle component analysis (PCA) techniques, and here we adopted kernel principal component analysis problem (KPCA)~\cite{Scholkopf98}, for its nice formulation and computational efficiency. KPCA  computes principal components in high-dimensional feature spaces of the input. In our case, for each $c$ the input,  $\hat{\mathcal{U}}_{V_i\,|\,PA^i,C=c}$, is a matrix.  We can stack it into a long vector, and then represent $\hat{\lambda}_i(c)$ by making use of a second kernel, $k^{(2)}$ (which is usually different from $k^{(1)}$), as required by KPCA.  Denote by the corresponding Gram matrix by $M$, whose $(c,c')$th entry is, $M(c,c') \triangleq k^{(2)}\big(\hat{\mathcal{U}}_{V_i\,|\,PA^i,C=c}, \hat{\mathcal{U}}_{V_i\,|\,PA^i,C=c'}\big)$.  Calculating $\hat{\mathcal{U}}_{V_i\,|\,PA^i,C=c}$ involves the empirical kernel maps of $V_i$ and $PA^i$; below we show that we can directly find $M$ without explicitly making use of empirical kernel maps.

If we use a linear kernel for $k^{(2)}$, the $(c,c')$th entry of $M$ is \footnote{When $PA^i$ is an empty set, $P(V_i\,|\,PA^i)$ reduces to $P(V_i)$. In this case we use the embedding of $P(V_i)$, $\mu_{V_i} \triangleq \mathbb{E}_{P(V_i)}[\psi(V_i)]$, whose empirical estimate is the sample mean of $\psi(V_i)$ on the sample. Here $\psi(\cdot)$ denotes the feature map. Accordingly, $M^l(c,c')$ reduces to $\frac{1}{n_c n_{c'}}\mathbf{1}_{n_{c'}}^\intercal K_{V_i}(c',c)\mathbf{1}_{n_c}$, where $n_c$ and $n_{c'}$ are the sizes of the data corresponding to $C=c$ and $C=c'$, respectively, and $\mathbf{1}_{n_c}$ is the vector of 1's of length $n_c$.}
\begin{flalign} \nonumber
&M^l(c,c') = \textrm{Tr}\big[\hat{\mathcal{U}}^\intercal_{V_i\,|\,PA^i,C=c}\hat{\mathcal{U}}_{V_i\,|\,PA^i,C=c'}\big] \\  \nonumber
=& \textrm{Tr}\big[ \boldsymbol{\Psi}_{PA^i}(c) \big(K_{PA^i}({c,c}) + \beta I\big)^{-1} \boldsymbol{\Psi}_{V_i}^{\intercal}(c) \boldsymbol{\Psi}_{V_i}({c'})
 \big(K_{PA^i}({c',c'}) + \beta I \big)^{-1}
 \boldsymbol{\Psi}_{PA^i}(c') \big]\\ \label{Eq:Gl}
=& \textrm{Tr}\big[ K_{V_i}(c',c) \big(K_{PA^i}(c,c) + \beta I \big)^{-1} K_{PA^i}(c,c')  \big(K_{PA^i}(c',c') + \beta I \big)^{-1}
   \big].
\end{flalign}
If $k^{(2)}$ is a Gaussian kernel with kernel width $\sigma_2$, we have
\begin{flalign}  \nonumber
 M^\mathcal{G}(c,c') &= \textrm{exp}\big(- \frac{||\hat{\mathcal{U}}_{V_i\,|\,PA^i}(c) - \hat{\mathcal{U}}_{V_i\,|\,PA^i}(c')||_F^2}{2\sigma_2^2} \big)  \\ \label{Eq:Gg}
&= \textrm{exp}\big(-\frac{
 M^l(c,c) +M^l(c',c') - 2M^l(c',c)}{2\sigma_2^2}
 \big),
\end{flalign}
where $||\cdot||_F$ denotes the Frobenius norm.

Finally, $\hat{\lambda}_i(C)$ can be found by performing eigenvalue decomposition on the above Gram matrix, $M^l$ or $M^g$; for details please see~\cite{Scholkopf98}. Algorithm 2 summarizes the proposed KNR method. There are several hyperparameters to set. In our experiments, we set the kernel width $\sigma_1^2$ (for $k^{(1)}$) and $\sigma_2^2$ (for $k^{(2)}$) to the median distance between points in the sample, as in~\cite{Gretton07}. We kept the window length $L=100$.

\begin{algorithm}
\caption{KNV of Causal Models}
\begin{enumerate}
\item For possible values $c$ and $c'$, calculate $K_{V_i}({c,c'})$ and $K_{PA^i}({c,c'})$ with kernel $k^{(1)}$. If $C$ is a time index, they can be obtained by extracting corresponding entries of the kernel matrices $K_{V_i}$ and $K_{PA^i}$ on the whole data.

\item Calculate Gram matrix $M$ with kernel $k^{(2)}$ (see Eq.~\ref{Eq:Gl} for linear kernels and Eq.~\ref{Eq:Gg} for Gaussian kernels).  \

\item Find $\hat{\lambda}_i(C)$ by directly feeding Gram matrix $M$ to KPCA. That is, perform eigenvalue decomposition on $M$ to find the nonlinear principal components $\hat{\lambda}_i(C)$, as in Section 4.1 of~\cite{Scholkopf98}.
\end{enumerate}
\label{KNV}
\end{algorithm}

\section{Experimental Results on Simulated Data}
\label{Sec:simul}

\subsection{A Toy Example}
We generated synthetic data according to the SEMs specified in Fig. \ref{simulated1}. More specifically, the exogenous input to $V_1$, the causal strength from $V_3$ to $V_5$ (the coefficient $f_3$ in the structural equation for $V_5$), and the noise variance in the equation for $V_4$ are time varying; the changing parameters were represented by sinusoid or cosine functions of $T$. We used different periodic levels ($w=5,10,20,30$) of the varying components, as well as different sample sizes ($N=600, 1000$). In each setting, we run 10 replications, with both our enhanced constraint-based method (Algorithm 1, with the time index for $C$) and the original constraint-based method;  we used the SGS search procedure \cite{SGS93} and kernel-based conditional independence test \cite{Zhang11_KCI}. 

\begin{figure}[ht]
\setlength{\abovecaptionskip}{-0pt}
\setlength{\belowcaptionskip}{-0pt}
\vspace{-3pt}
 \centering

\small
\begin{equation*}\hspace{-0.25cm}
    \left \{
    \begin{aligned}
        V_1&=f_1 \cdot E_0 + E_1,\\
        V_2 & =  \text{sin}(V_1^2)-0.2 V_1+E_2, \\
      V_3 & =0.5 \text{cos}(V_1) + E_3, \\
      V_4 & =\text{sin}(V_2+V_3) + 0.2V_2 + f_2 \cdot E_4, \\
      V_5 & =f_3 \cdot \text{tanh}(V_3) + 0.2V_3 + E_5, \\
      V_6 & =0.5 (V_2+V_5) + E_6.
    \end{aligned}
    \right.~~
\boxed{
\small
  \begin{aligned}
    & f_1 = \text{sin} (w \cdot \frac{t}{N})\\
    & f_2 = 0.8 \text{sin} (w \cdot(\frac{t}{N}+\frac{1}{2}) ),\\
    & f_3 = 1.5 \text{cos} (w \cdot(\frac{t}{N}+\frac{1}{2}) ),\\
    & \text{with } t=1,\cdots, N.\\
    & e_0 \sim U[0,1], \\
    & e_i \sim U[-0.3,0.3],\\
    & \text{with } i=1,\cdots,6.
  \end{aligned}
  }
\end{equation*}
 \caption{The SEMs according to which we generated the simulated data. The input to $V_1$, the noise variance to $V_4$, and the causal strength from $V_3$ to $V_5$ are time varying, represented by $f_1$, $f_2$ and $f_3$, respectively. We tried different $w$, and different sample sizes $N$. }
\label{simulated1}
\end{figure}

\begin{figure}[ht]
\centering
\setlength{\abovecaptionskip}{-3pt}
\setlength{\belowcaptionskip}{-2pt}
 \centering
\includegraphics[width=.8\textwidth,height = 4cm,trim={1.4cm .5cm 2cm .3cm},clip]{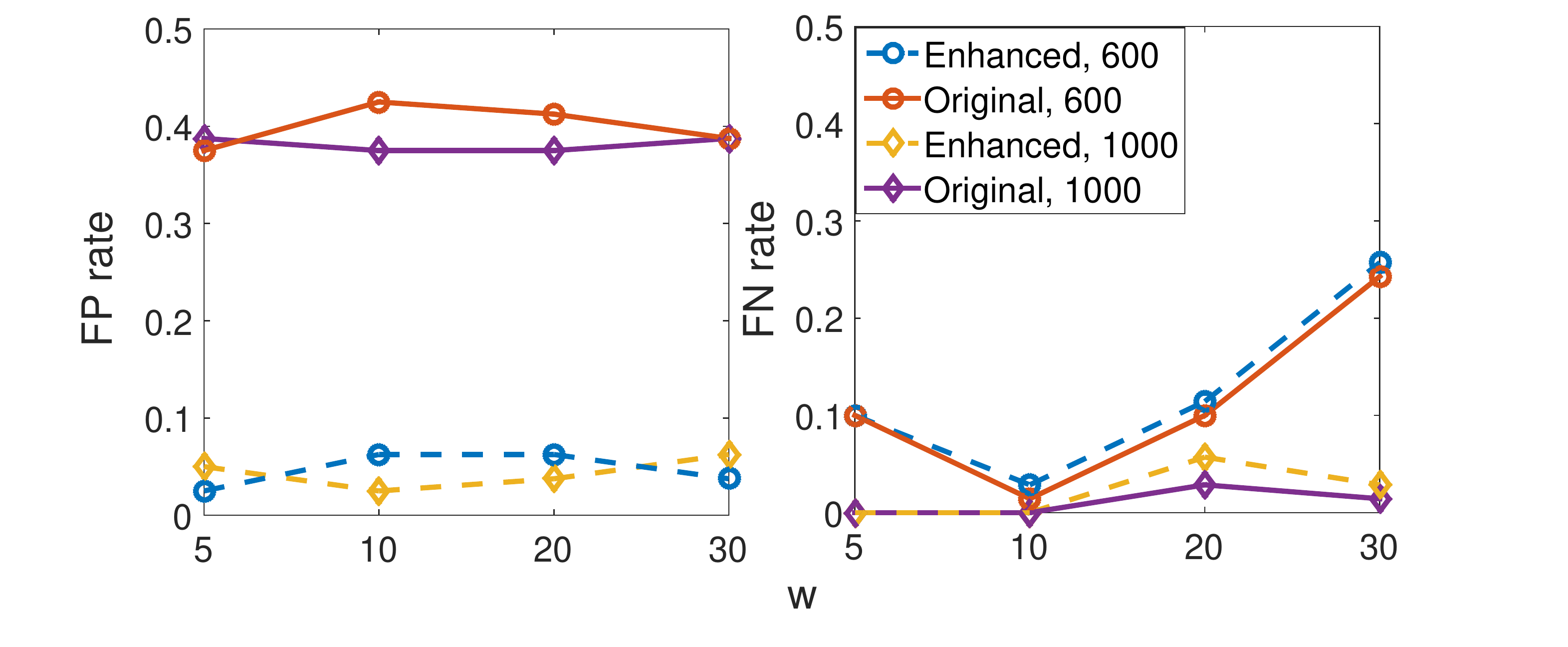}
 \caption{The estimated FP rate and FN rate with $w=5,10,20,30$ and $N=600,1000$ by both our enhanced constraint-based method and the original constraint-based method.}
\label{simulated2}
\end{figure}


Fig. \ref{simulated2} shows the False Positive (FP) rate and the False Negative (FN) rate of the discovered adjacencies between the $V$'s at significance level $0.05$. It is obvious that compared to the original method, our method effectively reduces the number of spurious connections, i.e., edges $(V_1, V_4)$, $(V_1, V_5)$ and $(V_4, V_5)$, in all the settings. The FN rate only very slightly increases.  As $w$ increases, the FP rate stays stable, and the FN rate slightly increases for both methods; as $N$ increases, the FN rate is greatly reduced.
In addition, from the augmented causal graph, we can identify causal directions  by the procedure in Section \ref{Sec:unify}. In this simulation, the whole causal DAG is correctly identified. However, with the original SGS method, we can only identify two causal directions: $5 \rightarrow 6$ and $2 \rightarrow 6$, and there are spurious edges $(V_1, V_4)$, $(V_1, V_5)$ and $(V_4, V_5)$. 

Furthermore, we visualized the nonstationarity of causal modules, $P(V_1)$, $(V2,V3)\rightarrow V_4$, and $V_3\rightarrow V_5$, by KNV (Algorithm 2).
We tried both the linear kernel and Gaussian kernel for $k^{(2)}$.
Figure \ref{simulated_visual_Sup} shows the first component of the extracted nonstationarity encapsulators $\hat{\lambda}_i$, $i=1,4,5$, corresponding to the three nonstationary causal models; see the blue solid lines.  Panels (a) and (b) correspond to the setting $w=5, N=600$ and $w=30, N=600$, respectively.  The red dashed lines show the changing parameters $f_1$, $f_2$, and $f_3$ in the respective causal models. Note that they have been rescaled to match with the nonstationarity encapsulators $\hat{\lambda}_i$.  We can see that KNV successfully recovers the variability in the causal models (as represented by changing parameters $f_1$, $f_2$, $f_3$, corresponding to changes in the causal strength or noise variance). In addition, the Gaussian kernel gives better results especially in the case where $w=30$.

\begin{figure}[ht]
\centering
\setlength{\abovecaptionskip}{-3pt}
\setlength{\belowcaptionskip}{-2pt}
 \centering
 \subfigure[]{
\includegraphics[width=.42\textwidth,height = 3.6cm,trim={1.4cm .2cm 2cm 0.2cm},clip]{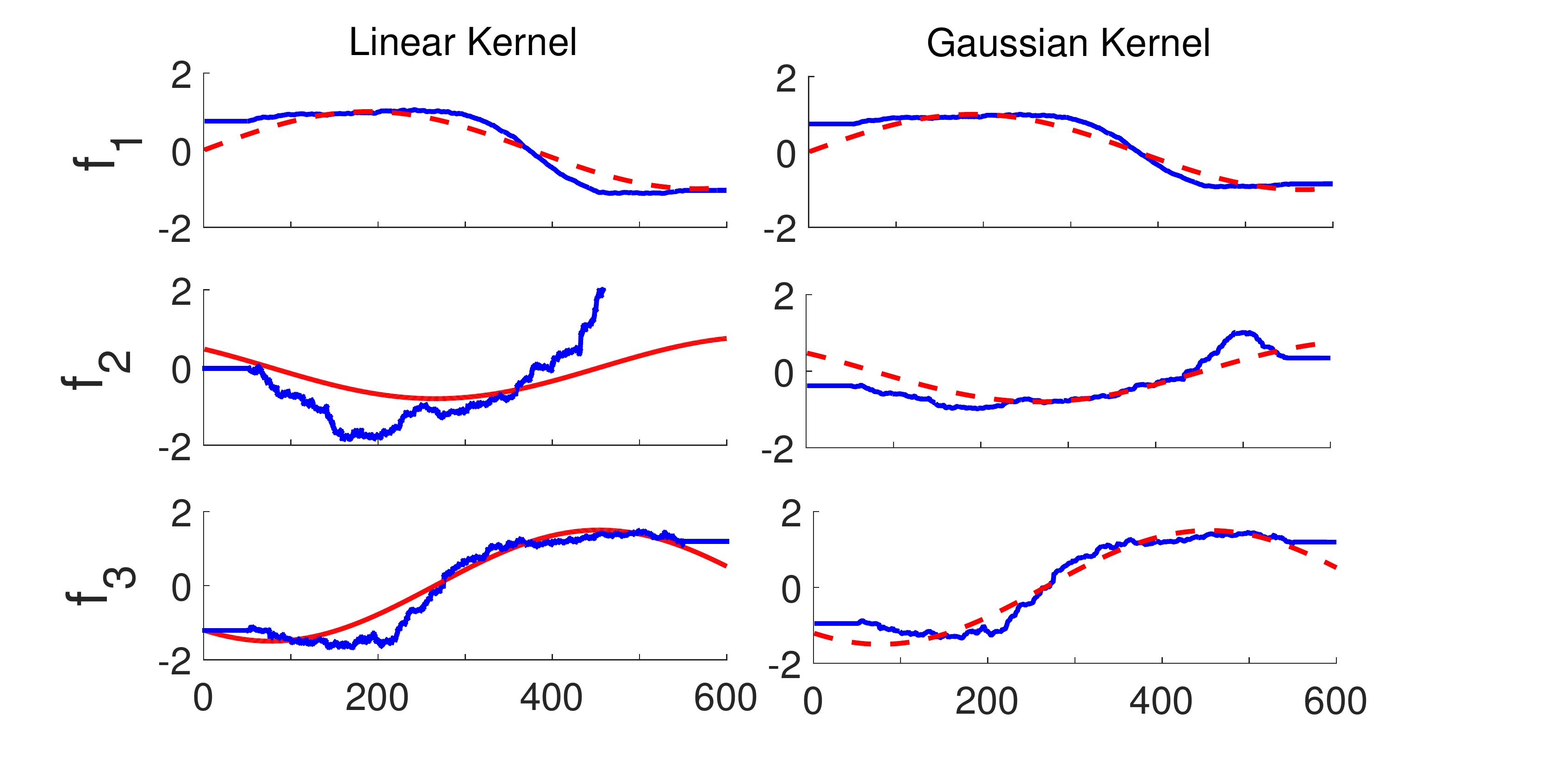}
}
\subfigure[]{
\includegraphics[width=.42\textwidth,height = 3.7cm,trim={1.4cm .2cm 2cm .2cm},clip]{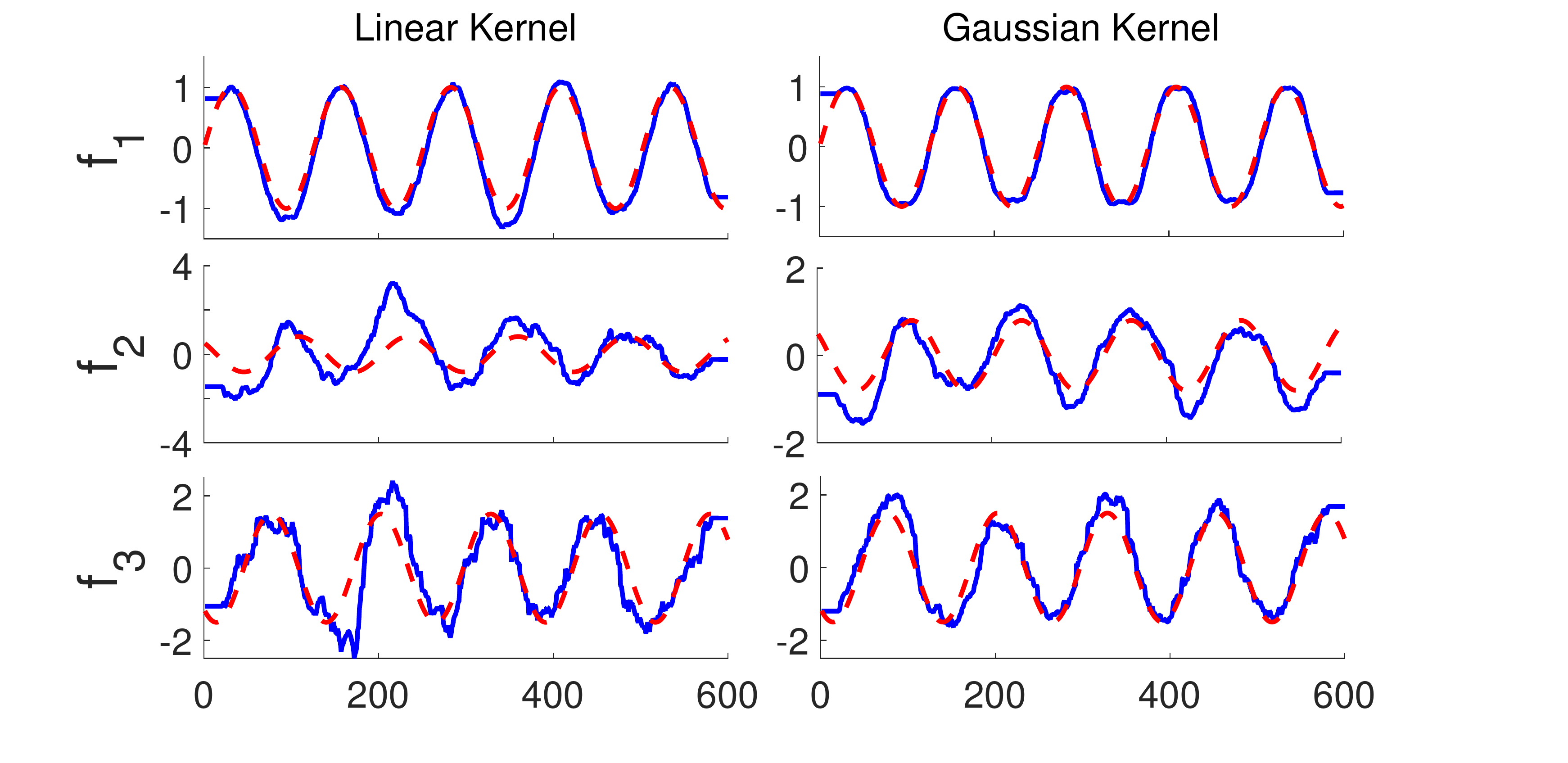}
}
 \caption{The estimated nonstationarity encapsulators given by KNV and the corresponding changing parameters $f_1$, $f_2$, and $f_3 $ in causal models $P(V_1)$, $(V2,V3)\rightarrow V_4$, and $V_3\rightarrow V_5$, tested on both linear kernel and Gaussian kernel for $k^{2}$ in KNV. The blue solid line represents the recovered signal, while the red dashed line represents the true signal. (a) For the setting $w=5$ and $N=600$. (b) $w=30$ and $N=600$.}
\label{simulated_visual_Sup}
\end{figure}



To summarize, we found that when there is only one changing parameter in the causal model $P(V_i\,|\,PA^i)$, which may be the linear coefficient, the mean of the noise, or its variance, with the Gaussian kernel for $k^{(2)}$, one component of $\lambda_i(C)$ is usually enough to capture the changes -- this component is close to a nonlinear transformation of the changing parameter, and its corresponding eigenvalue is at least five times bigger than the remaining ones. However, if the functional form of the causal model changes, say, if the SEM changes from a linear one to a quadratic one, more than one component of  $\lambda_i(C)$ has relatively large eigenvalues, and they jointly capture the change in $P(V_i\,|\,PA^i)$ (results are not included here).

\subsection{Experimental Results on Simulated fMRI}
In recent years the brain effective connectivity study from fMRI has received much attention. The fMRI experiments may last for a relatively long time period, during which the causal influences are likely to change along with certain unmeasured states (e.g., the attention) of the subject and ignoring the time-dependence may lead to spurious connections. Likewise, the causal influences may also vary as a function of the experimental condition (e.g., health, disease, and behavior) \cite{Chronnectome}.

Currently little is known for the causal connectivity in our brain, so firstly we applied our approach on simulated fMRI data which enables us to evaluate the robustness of our approach with known ground truth. We generated the simulated fMRI signal according to the DCM forward model \cite{DCM}. 

Fig. \ref{setting_dcm} shows a basic setting of the network topologies, where we modeled the external input $ u_1$ to the nodes as random square wave \cite{Smith11}, and the external input to the connections with different kinds of functions, e.g., exponential decay, square wave, and log functions. Since the study on how causal connections between brain regions are changing is very limited, we tried to represent them with different functions to model different possible scenarios. In addition, in practice we may analyze the fMRI signal concatenated from different scans (different subjects or different instruments), so in order to model this situation, we concatenated two generated BOLD signals to derive the final signal.

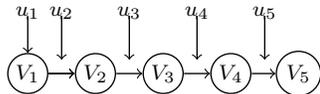
\begin{SCfigure}[\sidecaptionrelwidth][ht]
\setlength{\abovecaptionskip}{0pt}
\setlength{\belowcaptionskip}{0pt}
 \centering
 \begin{tikzpicture}[scale=.45, line width=0.5pt, inner sep=0.2mm, shorten >=.1pt, shorten <=.1pt]
 \draw (0, 0) node(1) [circle, draw] {{\footnotesize\,$V_1$\,}};
   \draw (2, 0) node(2) [circle, draw] {{\footnotesize\,$V_2$\,}};
 \draw (4, 0) node(3) [circle, draw] {{\footnotesize\,$V_3$\,}};
 \draw (6, 0) node(4) [circle, draw] {{\footnotesize\,$V_4$\,}};
 \draw (8, 0) node(5) [circle, draw] {{\footnotesize\;$V_5$\;}};
  \draw (0, 1.8) node(6) {{\footnotesize\;$u_1$\;}};
  \draw (1, 1.8) node(7) {{\footnotesize\;$u_2$\;}};
  \draw (3, 1.8) node(8) {{\footnotesize\;$u_3$\;}};
  \draw (5, 1.8) node(9) {{\footnotesize\;$u_4$\;}};
  \draw (7, 1.8) node(10) {{\footnotesize\;$u_5$\;}};
   \draw[->] (1) -- (2); 
   \draw[->] (2) -- (3);
   \draw[->] (3) -- (4);
   \draw[->] (4) -- (5);
   \draw[->] (1) -- (2);
\draw[->] (0,1.6) to[out=90,in=90](0,0.6);
\draw[->] (1,1.6) to[out=90,in=90](1,0.4);
\draw[->] (3,1.6) to[out=90,in=90](3,0.4);
\draw[->] (5,1.6) to[out=90,in=90](5,0.4);
\draw[->] (7,1.6) to[out=90,in=90](7,0.4);
 \end{tikzpicture}
 \caption{The basic setting of the network topologies.}
 \label{setting_dcm}
\end{SCfigure}

We tested our enhanced constraint-based method on 50 realizations, where the time information $ T $ is included into the system to capture smooth varying causal relations and the influences from smooth varying latent confounders. Fig. \ref{estimate_dcm1}(a) gives the False Positive (FP) rate and False Negative (FN) rate at significance level 0.03. We compared our enhanced constraint-based method with the original one (both with SGS search and KCI test), and we also compared with partial correlation test since it is widely used in fMRI analysis \cite{Smith11}. It is obvious that our approach greatly reduces the FP rate, that is, it effectively reduces spurious connections which are induced by the time-varying connections, while at the same time increases the FN rate in a reasonable range. The partial correlation test gives the worst results, with the FP rate 1 and the FN rate 0.1016 in a small-sample-size case. Since there is a certain amount of variation across realizations, we give a causal connection if it exists in more than $ 80\% $ of all the realizations. Fig. \ref{estimate_dcm1}(b-c) show the causal structures estimated by our approach and the original constraint-based method with KCI-test.  The partial correlation test produces fully connected graph.

\begin{figure}[ht]
\setlength{\abovecaptionskip}{0pt}
\setlength{\belowcaptionskip}{0pt}
\centering
\subfigure[Estimation error.]{
\setlength{\abovecaptionskip}{0pt}
\setlength{\belowcaptionskip} {0pt}
  \includegraphics[width=0.8\textwidth,height=4cm,trim={2cm .5cm 2cm .5cm},clip]{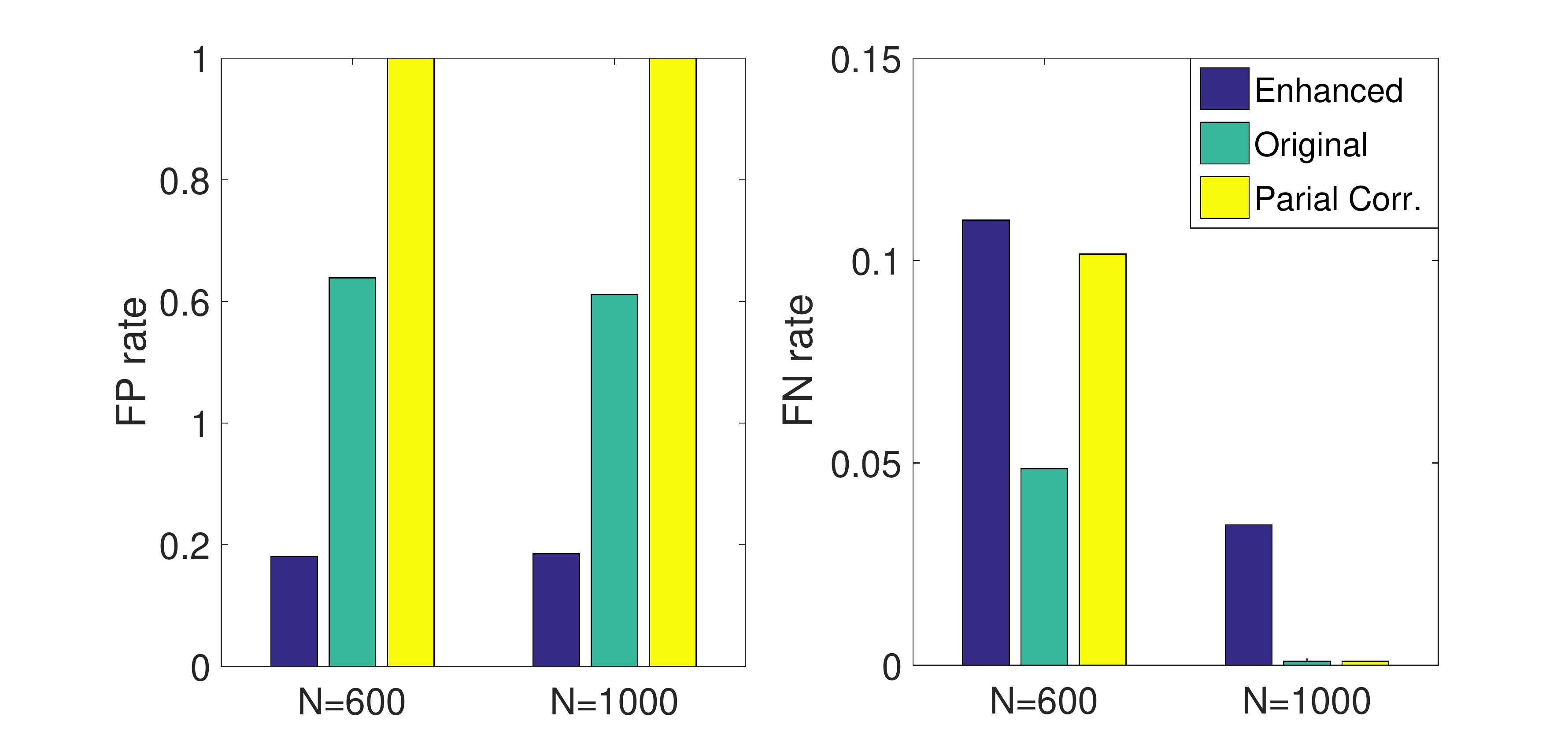}%
}
\\
 \begin{tikzpicture}[scale=.6, line width=0.5pt, inner sep=0.2mm, shorten >=.1pt, shorten <=.1pt]
 \draw (0, 0) node(1) [circle, draw] {{\footnotesize\,$V_1$\,}};
   \draw (2, 0) node(2) [circle, draw] {{\footnotesize\,$V_2$\,}};
 \draw (4, 0) node(3) [circle, draw] {{\footnotesize\,$V_3$\,}};
 \draw (6, 0) node(4) [circle, draw] {{\footnotesize\,$V_4$\,}};
 \draw (8, 0) node(5) [circle, draw] {{\footnotesize\;$V_5$\;}};
   \draw[-] (2) -- (3);
   \draw[-] (3) -- (4);
   \draw[-] (4) -- (5);
   \draw[->] (1) -- (2);
 \end{tikzpicture}
 \\
 ~~~~~~~(b) Estimated causal graph by the enhanced method~~~~~~~~~~\\
\vspace{0pt}
\hspace{-10pt}
  \begin{tikzpicture}[scale=.6, line width=0.5pt, inner sep=0.2mm, shorten >=.1pt, shorten <=.1pt]
  \draw (0, 0) node(1) [circle, draw] {{\footnotesize\,$V_1$\,}};
    \draw (1.5, 0) node(2) [circle, draw] {{\footnotesize\,$V_2$\,}};
  \draw (3, 0) node(3) [circle, draw] {{\footnotesize\,$V_3$\,}};
  \draw (4.5, 0) node(4) [circle, draw] {{\footnotesize\,$V_4$\,}};
  \draw (6, 0) node(5) [circle, draw] {{\footnotesize\;$V_5$\;}};
    \draw[-] (1) -- (2);
    \draw[-] (2) -- (3);
    \draw[-] (3) -- (4);
    \draw[-] (4) -- (5);
     \draw[-] (2) to[out=45,in=135] (5); 
    \draw[-] (3) to[out=-45,in=-135] (5);
    \end{tikzpicture}\\
    ~~~~~~~~~~~(c) Estimated graph by the original method~~~~~~~~~~~\\
\caption{ (a) The estimation error, FP rate  and FN rate, derived from our enhanced constraint-based method, the original constraint-based method with KCI and the partial correlation test. (b,c) The estimated causal graph by the our approach and the original constraint-based one.} 
\label{estimate_dcm1}
\end{figure} 

\section{Experiments on Real Data} \label{Sec:real}

\subsection{On Stock Returns}
We applied our method to daily returns of 10 major stocks in Hong Kong. The dataset is from the Yahoo finance database, containing daily dividend/split adjusted closing prices from 10/09/2006 to 08/09/2010. For the few days when the stock price is not available, a simple linear interpolation is used to estimate the price. Denoting the closing price of the $i_{th}$ stock on day $t$ by $P_{i,t}$, the corresponding return is calculated by $V_{i,t} = \frac{P_{i,t}-P_{i,t-1}}{P_{i,t-1}} $. The 10 stocks are Cheung Kong Holdings (1), Wharf (Holdings) Limited (2), HSBC Holdings plc (3), Hong Kong Electric Holdings Limited (4), Hang Seng Bank Ltd (5), Henderson Land Development Co. Limited (6), Sun Hung Kai Properties Limited (7), Swire Group (8), Cathay Pacific Airways Ltd (9) and Bank of China Hong Kong (Holdings) Ltd (10). $3, 5$ and $10$ belong to Hang Seng Finance Sub-index (HSF), $1, 8 $ and $9$ belong to Hang Seng Commerce \& Industry Sub-index (HSC), $2,6$ and $7$ belong to Hang Seng Properties Sub-index (HSP) and $4$ belongs to Hang Seng Utilities Sub-index (HSU). It is believed that during the financial crisis around 2008, the causal relations in Hong Kong stock market have changed.

Fig. \ref{stock} shows the estimated causal structure by our method, where $2, 3, 4, 5, 6$ and $7$ are found to be time-dependent as indicated by red cycles.  In contrast, the original constraint-based method produces four more edges, which are  (2,3), (3,6), (5,7) and (6,8).  We found that all time-dependent returns are in HSF, HSP, and HSU, which are directly affected by some unconsidered factors, e.g. policy changes.  Furthermore, we inferred the causal directions by the procedure given in Section \ref{Sec:unify}, and we found that all the inferred directions are reasonable. In particular, the within sub-index causal directions tend to satisfy the owner-member relationship. For example, $4 \rightarrow 1$ because $1$ partially owns $4$, and similarly for $5 \rightarrow 3$ and $9 \rightarrow 8$. Those stocks in HSF are the major causes to those in HSC and in HSP, and the stocks in HSP and HSU impact those in HSC. These causal relations match with the fact that financial institutions are in the leading position to impact other fields, and industries are usually affected by financial institutions, companies in properties, and companies in utilities. One exception is that, $10$, Bank of China Hong Kong in HSF, is affected by $2$ in HSP; it is perhaps because of Bank of China Hong Kong's close relation with Bank of China in mainland China. 

\begin{figure}
  \centering
  \setlength{\abovecaptionskip}{0pt}
\setlength{\belowcaptionskip}{0pt}
  \includegraphics[width=0.7\textwidth,trim={1.2cm 1cm 1.2cm 1cm},clip]{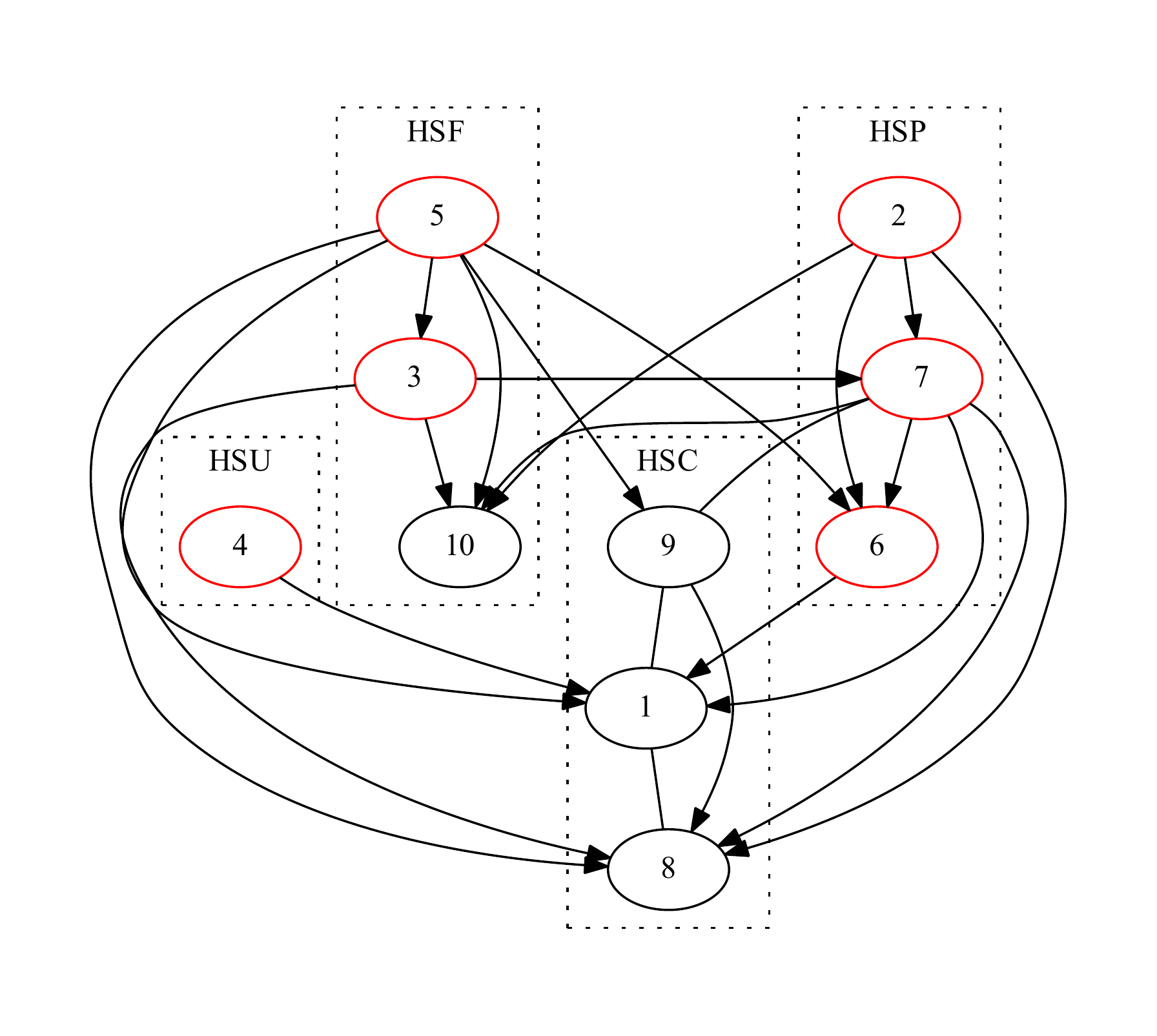}%
\caption{The estimated causal structure among the 10 stock returns. Red cycles indicate that the corresponding stock returns are time-dependent.  Our enhanced constraint-based method eliminated the edges for pairs (2,3), (5,7), (3,6) and (6,8), compared to the results by original SGS. }\label{stock}
\end{figure}
\begin{figure}
  \centering
  \setlength{\abovecaptionskip}{0pt}
\setlength{\belowcaptionskip}{0pt}
\includegraphics[width=0.265\textwidth,height = 1.2cm,trim={0.65cm 11.6cm 0.86cm 12cm},clip]{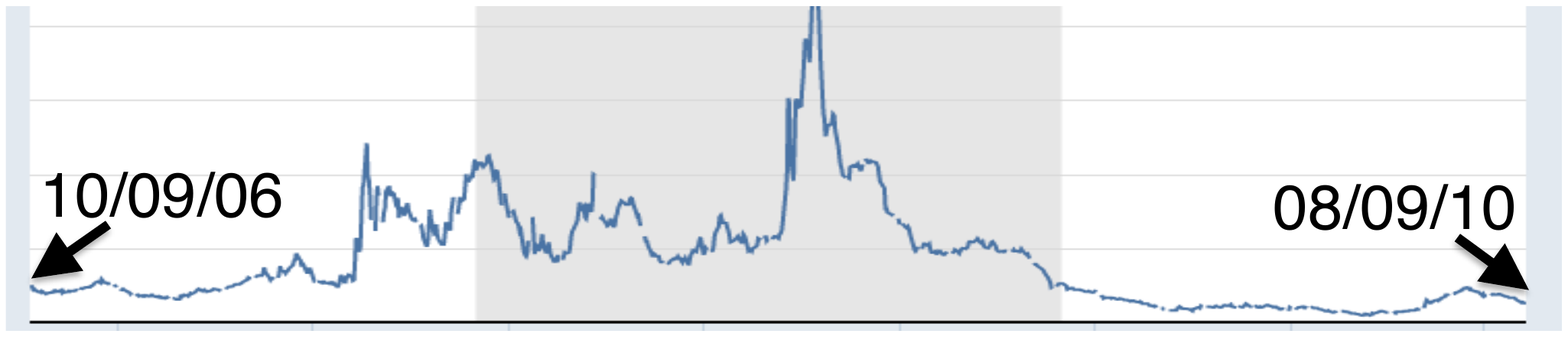}{\footnotesize~~~}~~~~~~~~~~~~~~~~~~~~~~~~~~~~~~~~~~~~~\\
  \includegraphics[width=0.8\textwidth,trim={1.2cm 0cm 1.2cm 0cm},clip]{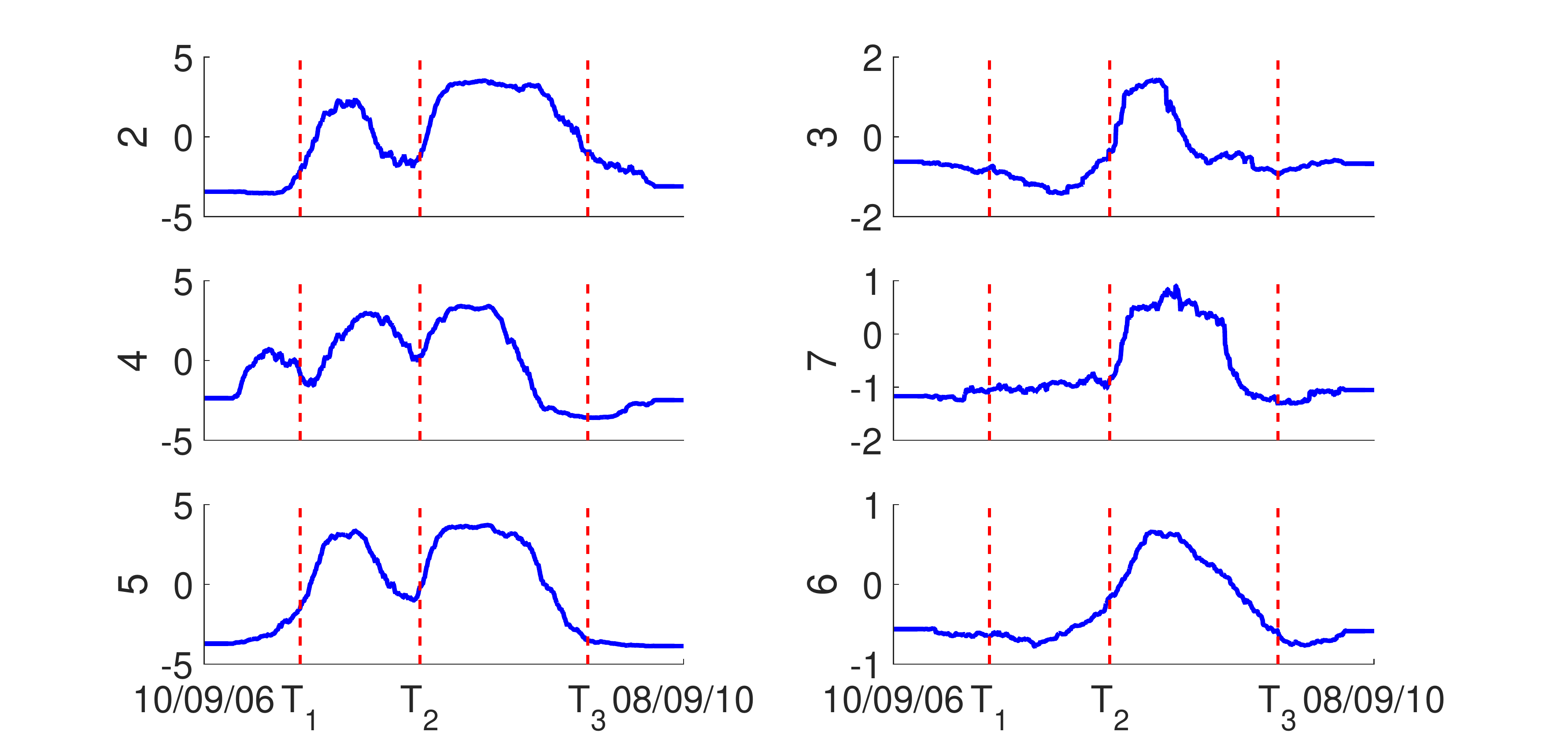}%
\caption{The visualized nonstationarity of causal modules of time-dependent stock returns as well as the curve of the TED spread over the same period. Top: Curve of the TED spread shown for comparison. 
Bottom: Visualized nonstationarity of causal modules for stocks $2,3,4,5,6$, and $7$, where $T_1$, $T_2$, and $T_3$ stand for  $07/16/2007$, $06/30/2008$, and $02/11/2009$, respectively. 
We can see that the nonstationary components of root causes, $2,4$, and $5$, share the similar variability with change points around $T_1$, $T_2$, and $T_3 $. The nonstationary components of $3$, $6$, and $7$ have change points only around $T_2$ and $T_3$.}\label{stock_visual}
\end{figure}

Figure \ref{stock_visual} (bottom panels) visualizes the nonstationarity of (changing) causal modules, for stocks $2,3,4,5,6$, and $7$.  We can see that the nonstationary encapsulators of root causes, $2,4$, and $5$, share a similar variability; 
the change points are around $T_1$ (07/16/2007), $T_2$ (06/30/2008), and $T_3$ (02/11/2009).
The nonstationary encapsulators of $3$, $6$, and $7$ have change points around $T_2$  (06/30/2008) and $T_3$ (02/11/2009), but without $T_1$, which means that at the beginning of financial crisis, these stocks were not directly affected by the change of external factors. These findings match with the critical time points of financial crisis around the year of $2008$.
The active phase of the crisis, which manifested as a liquidity crisis, could be dated from August, 2007,\footnote{See more information at \url{https://en.m.wikipedia.org/wiki/Financial_crisis_of_2007-08}.} around $T_1$.  The nonstationarity encapsulators, especially those of 2, 4, 5, and 3, seem to be consistent with the change of the TED spread,\footnote{See \url{https://en.m.wikipedia.org/wiki/TED_spread}.} which is an indicator of perceived credit risk in the general economy and shown in Figure \ref{stock_visual} (top panel) for comparison; 7 and 6 seem to be directly influenced by the change in the underlying unmeasured factor, which may be related to the credit risk, mainly from 2008.


\subsection{On fMRI Hippocampus}

This fMRI Hippocampus dataset \cite{Myconnectome} was recorded from six separate brain regions: perirhinal cortex (PRC), parahippocampal cortex (PHC), entorhinal cortex (ERC), subiculum (Sub), CA1, and CA3/Dentate Gyrus (CA3) in the resting states on the same person in 64 successive days. 
We are interested in investigating causal connections between these six regions in the resting states. The anatomical connections between them reported in the literature are shown in Fig. \ref{hipp}. We used the anatomical connections as a reference, because in theory a direct causal connection between two areas should not exist if there is no anatomical connection between them.
\begin{SCfigure}[\sidecaptionrelwidth][ht]
  \centering
  \setlength{\abovecaptionskip}{0pt}
\setlength{\belowcaptionskip}{0pt}
\vspace{-0pt}
  \includegraphics[width=0.3\textwidth,trim={1.3cm 1.3cm 1.3cm 1.3cm},clip]{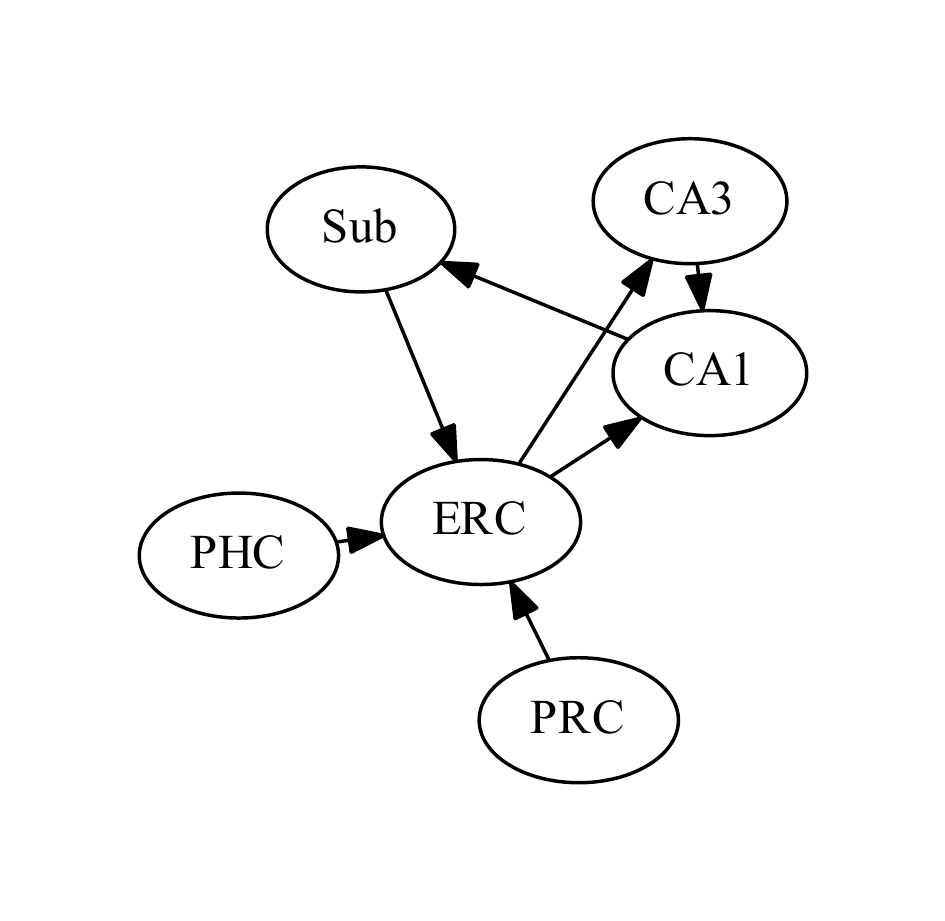}%
  \caption{The anatomical connections between the six separate brain regions.}\label{hipp} \vspace{-0.3cm}
\end{SCfigure}

We applied our enhanced constraint-based method on 10 successive days separately, with time information $T$ as an additional variable in the system. We assumed that the underlying causal graph is acyclic, although the anatomical structure gives cycles. We found that our method effectively reduces the FP rate, from $62.86 \%$ to $17.14 \%$, compared to the original constraint-based method with SGS search and KCI-test. Here we regard those connections that do not exist in the anatomical structure as spurious; however, with the lack of ground truth, we are not able to compare the FN rate. We found that the causal structure varies across days, but the connections between CA1 and CA3, and between CA1 and SUB are robust, which coincides with the current findings in neuroscience \cite{CA3_CA1}. In addition, on most datasets the causal graphs we derived are acyclic, which validates the use of constraint-based method. Furthermore, we applied the procedure in Section \ref{Sec:unify} to infer the causal directions. We successfully recovered the following causal directions: $\text{CA3} \rightarrow \text{CA1}$, $\text{CA1} \rightarrow \text{Sub}$, $\text{Sub} \rightarrow \text{ERC}$, $\text{ERC} \rightarrow \text{CA1}$ and $\text{PRC} \rightarrow \text{ERC}$, and the accuracy of direction determination is 85.71\% (we consider the anatomical connections, shown in Fig.~\ref{hipp}, as ground truth for the directions).

\subsection{On WiFi Dataset}

The WiFi dataset has been seen as a benchmark dataset to test the performance of domain adaptation algorithms. The indoor WiFi localization data can be easily outdated since the WiFi signal strength (features) may vary with time periods, devices, space and usage of the WiFi \cite{WIFIYQ}. Therefore, it is important to detect the domain-varying features in domain adaptation. In this dataset, the data were collected from three different time periods in the same locations.

We added the domain information ($D={1,2,3}$) as an additional variable in the causal system to capture the domain-varying features. Here we set the significance level as 0.05 and we found that only a small subsets of features (8/67) vary across domains, with feature index = \{1, 2, 3, 4, 5, 6, 12, 44\}, which provides benefits for further analysis in domain adaptation. We also found that compared to the original constraint-based method, our method gives much sparser connections between the features (the number of connections between the features is reduced from 52 to 26).

\section{Conclusion}

This paper is concerned with discovery and visualization of nonstationary models, where causal modules may change over time or across datasets.  We assume a weak causal sufficiency condition, which states that all confounders can be written as smooth functions of time or the domain index. We proposed (1) an enhanced constraint-based method for locating variables whose causal modules are nonstationary and estimating the skeleton of the causal structure over the observed variables, (2) a method for causal direction determination that takes advantage of the nonstationarity, and (3) a technique for visualizing nonstationary causal modules.

In this paper we only considered instantaneous or contemporaneous causal relations, as indicated by the assumption that the observed data are independently but not identically distributed; the strength (or model, or even existence) of the causal relations is allowed to change over time. We did not explicitly consider time-delayed causal relations and in particular did not engage autoregressive models. However, we note that it is natural to generalize our framework to incorporate time-delayed causal relations, just in the way that constraint-based causal discovery was adapted to handle time-series data (see, e.g.,~\cite{Chu08}).

There are several open questions we aim to answer in future work. First, in this paper we assumed that causal directions do not flip despite of nonstationarity. But what if some causal directions also change over time or across domains? Can we develop a general approach to detect causal direction changes? Second, to fully determine the causal structure, one might need to combine the proposed framework with other approaches, such as those based on restricted functional causal models. How can this be efficiently accomplished?  Third, the issue of distribution shift may decrease the power of statistical (conditional) independence tests. Is it possible to mitigate this problem?


\end{document}